\theoremstyle{plain}
\newtheorem{theorem}{Theorem}[section]
\newtheorem{observation}[theorem]{Observation}
\newtheorem{lemma}[theorem]{Lemma}
\newtheorem{claim}[theorem]{Claim}
\theoremstyle{definition}
\newtheorem{definition}[theorem]{Definition}
\newtheorem{problem}[theorem]{Problem}
\theoremstyle{remark}
\newcommand{\set}[1]{\left\{#1\right\}}
\newcommand{\clos}{\overline}
\newcommand{\N}{\mathbb{N}}     
\renewcommand{\R}{\mathbb{R}}     
\newcommand{\DD}{\mathcal{D}} 
\newcommand{\HH}{\mathcal{H}}
\renewcommand{\P}{\mathcal{P}}  
\newcommand{\ballcn}[3]{\clos{B}_{#1}^{#3}(#2)} 
\newcommand{\ballcinf}[2]{\ballcn{#1}{#2}{\infty}} 
\newcommand{\distribution}[3]{\mathcal{D}_{#1,#2,#3}}
\DeclareMathOperator{\range}{range}
\DeclareMathOperator{\defeq}{\overset{def}{=}}
\DeclareMathOperator{\diam}{diam}
\DeclareMathOperator{\dist}{\sim}
\DeclareMathOperator{\dtv}{d_{TV}}
\DeclareMathOperator{\bern}{Bern}
\DeclareMathOperator{\err}{e}
\DeclareMathOperator{\errunif}{\err_{unif}}
\DeclarePairedDelimiter\abs{\lvert}{\rvert}
\DeclarePairedDelimiter\norm{\lVert}{\rVert}
\newcommand*{\numberfullref}[1]{\hyperref[{#1}]{\Autoref*{#1} (\nameref*{#1})}}
\newcommand*{\namefullref}[1]{\hyperref[{#1}]{\nameref*{#1} (\Autoref*{#1})}}
\newcommand\email[2][]%
  {\newaffiltrue\let\AB@blk@and\AB@pand
      \if\relax#1\relax\def\AB@note{\AB@thenote}\else\def\AB@note{\relax}%
        \setcounter{Maxaffil}{0}\fi
      \begingroup
        \let\protect\@unexpandable@protect
        \def\thanks{\protect\thanks}\def\footnote{\protect\footnote}%
        \@temptokena=\expandafter{\AB@authors}%
        {\def\\{\protect\\\protect\Affilfont}\xdef\AB@temp{\url{#2}}}%
         \xdef\AB@authors{\the\@temptokena\AB@las\AB@au@str
         \protect\\[\affilsep]\protect\Affilfont\AB@temp}%
         \gdef\AB@las{}\gdef\AB@au@str{}%
        {\def\\{, \ignorespaces}\xdef\AB@temp{\url{#2}}}%
        \@temptokena=\expandafter{\AB@affillist}%
        \xdef\AB@affillist{\the\@temptokena \AB@affilsep
          \AB@affilnote{}\protect\Affilfont\AB@temp}%
      \endgroup
      \let\AB@affilsep\AB@affilsepx
}
\title{List and Certificate Complexities in Replicable Learning}
\renewcommand\Affilfont{\itshape\small}
\author{Peter Dixon}
\affil{Ben-Gurion University of Negev}
\email{tooplark@gmail.com}
\author{A. Pavan }
\affil{Iowa State University}
\email{pavan@cs.iastate.edu}
\author{Jason Vander Woude}
\affil{University of Nebraska-Lincoln}
\email{jasonvw@huskers.unl.edu}
\author{ N. V. Vinodchandran}
\affil{University of Nebraska-Lincoln}
\email{vinod@cse.unl.edu}
\newcommand{\dbep}{{\sc $d$-Coin Bias Estimation Problem}}
\newcommand{\dtep}{{\sc $d$-Threshold Estimation Problem}}
\begin{document}
\maketitle

\begin{abstract}
We investigate replicable learning algorithms. 
Ideally, we would like to design algorithms that output the same canonical model over multiple runs, even when different runs observe a different set of samples from the unknown data distribution. In general, such a strong notion of replicability is not achievable. Thus we consider two feasible notions of replicability called {\em list replicability} and {\em certificate replicability}.  Intuitively, these notions capture the degree of (non) replicability. We design algorithms for certain learning problems that are optimal in list and certificate complexity.  We establish matching impossibility results.

\end{abstract}

\section{Introduction}

Replicability and reproducibility in science are critical concerns. The fundamental requirement that scientific results and experiments be replicable/reproducible is central to the development and evolution of science. In recent years, these concerns have grown as several scientific disciplines turn to data-driven research, which enables exponential progress through data democratization and affordable computing resources. The replicability issue has received attention from a wide spectrum of entities, from general media publications (for example, The Economist's "How Science Goes Wrong," 2013~\cite{economist}) to scientific publication venues (for example, see~\cite{Ioannidis05, Baker2016}) to professional and scientific bodies such as the National Academy of Sciences, Engineering, and Medicine (NASEM). The emerging challenges to replicability and reproducibility have been discussed in depth by a consensus study report published by NASEM~\cite{NASEM}.

A broad approach taken to ensure the reproducibility/replicability of algorithms is to make the datasets, algorithms, and code publicly available. Of late, conferences have been hosting replicability workshops to promote best practices and to encourage researchers to share code (for example, see ~\cite{PVLS21} and \cite{PSFNL2019}). An underlying assumption is that consistent results can be obtained using the same input data, computational methods, and code. However, this practice alone is insufficient to ensure replicability as modern-day approaches use computations that inherently involve randomness.

Computing over random variables results in a high degree of non-replicability, especially in machine learning tasks. Machine learning algorithms observe samples from a (sometimes unknown) distribution and output a hypothesis or model. Such algorithms are inherently non-replicable. Two distinct runs of the algorithm will output different models as the algorithms see different sets of samples over the two runs. Ideally, to achieve "perfect replicability," we would like to design algorithms that output the same canonical hypothesis over multiple runs, even when different runs observe a different set of samples from the unknown distribution.

We first observe that perfect replicability is not achievable in learning, as  a dependency of the output on the data samples is inevitable. We illustrate this with a simple learning task of estimating the bias of a coin: given $n$ independent tosses from a coin with unknown bias $b$, output an estimate of $b$. It is relatively easy to argue that there is no algorithm that  outputs a {\em canonical estimate} $v_b$ with probability $\geq 2/3$
so that $|v_b-b| \leq \varepsilon$. Consider a sequence of coins with biases $b_1 < b_2 < \cdots < b_m$ where each $b_{i+1}-b_i \leq \eta$  (for some small $\eta$) but $b_m-b_1 \geq 2\varepsilon$. For two adjacent biases, the statistical distance (denoted by $\dtv$) between $\mathcal{D}_{i+1}^n$ and  $\mathcal{D}_{i}^n$ is $\leq m\eta$, where $\DD_i^n$ is the distribution of $n$ independent tosses of the $i^{th}$ coin.
Let $v_{i+1}$ and $v_i$ be the canonical estimates output by the algorithm for biases $b_{i+1}$ and $b_i$ respectively. Since $A$ on samples from distribution $\mathcal{D}_{i}^n$ outputs $v_{i}$ with probability at least $2/3$ and $\dtv(D_i^n, D_{i+1}^n) \leq n \eta$,  $A(\mathcal{D}_{i+1}^n)$ must output $v_i$  with probability at least $2/3-n\eta$. We take $\eta = 1/10n$. Since $A(\mathcal{D}_{i}^n)$ must output a canonical value with probability at least $2/3$, this implies that $v_i=v_{i+1}$.  Thus, on all biases, $b_1, \ldots, b_m$, $A$ should output the same value and this leads to a contradiction since $b_1$ and $b_m$ are $2\varepsilon$ apart. However, it is easy to see that there is an algorithm for bias-estimation that outputs one of {\em two canonical} estimates using $n=O(1/\varepsilon^2)$ tosses: estimate the bias within $\varepsilon/2$ and round the value to the closest multiple of $\varepsilon$. The starting point of our work is these two observations. Even though it may not be possible to design learning algorithms that are perfectly replicable, it is possible to design algorithms whose ``non-replicability'' is minimized.

Motivated by this, we study two notions of replicability called {\em list-replicability} and {\em certificate-replicability} in the context of machine learning algorithms. These notions {\em quantify} the degree of (non)-replicability.  The notions we consider are rooted in the {\em pseudodeterminism}-literature~\cite{GatGoldwasser11,Goldreich19,GrossmanLiu19} which has been an active area in randomized algorithms and computational complexity theory. Section~\ref{sec:prior} discusses this connection and other works that are related to our work.

\subsection{Our Results}
We consider two notions of replicable learning: list-replicable learning and certificate-replicable learning. In list replicability, the learning algorithm should output one of the models from a list of models of size $\leq k$ with high probability. This means that when the learning algorithm is run multiple times, we see at most $k$ distinct models/hypotheses.   The value $k$ is called the {\em list complexity} of the algorithm. An algorithm whose list complexity is $1$ is perfectly replicable.
Thus list complexity can be considered as the degree of (non) replicability.  The goal in this setting is to design learning algorithms that minimize the list complexity $k$.

In certificate replicability, the learning algorithm has access to an $\ell$-bit random string called the certificate of replicability (that is independent of samples and the other randomness that it may use). We require that for most certificates, the algorithm must output a {\em canonical model} $h$ that can depend only on this $\ell$-bit random string $r$. Thus once we fix a certificate $r$, multiple runs of the algorithm that have access to $r$ will output the same model w.h.p.  We call $\ell$ the {\em certificate complexity}. Note that an algorithm with zero certificate complexity is perfectly replicable. Thus $\ell$ is another measure of the degree of (non) replicability of the algorithm. The goal in this setting is to design learning algorithms that minimize $\ell$. 

A critical resource in machine learning tasks is the number of samples that the algorithm observes known as the {\em sample complexity}. An efficient learning algorithm uses as few samples as possible.  In this work, we  measure the efficiency of learning algorithms in terms of sample complexity, list complexity, and certificate complexity. This work initiates a study of learning algorithms that are efficient in certificate/list complexities as well as 
sample complexity. We establish both positive results and impossibility results for certain learning tasks.
\paragraph{Estimating the bias of $d$ coins.}
Our first set of results is on efficient replicable algorithms for the coin-bias estimation problem. We consider the problem of estimating the biases of $d$ coins simultaneously by observing $n$ tosses of each of the coins which we call \dbep. The task is to output a bias vector $\vec{v}$ so that $\norm{\vec{b}-\vec{v}}_{\infty} \leq \varepsilon$  where $\vec{b}$ is the true bias vector. We show that there is a $(d+1)$-list replicable learning algorithm for this problem with a sample complexity (number of coin tosses) $n=O({\frac{d^2}{\varepsilon^2}}\cdot\log{\frac{d}{\delta}})$ per coin.   We also design a $\lceil \log {d\over \delta} \rceil$-certificate reproducible algorithm for the problem with sample complexity $O({d^2\over \varepsilon^2\delta^2})$ per coin. Here $(1-\delta)$ is the success probability. 

We also establish the optimality of the above upper bounds in terms of list and certificate complexities. We show that there is no $d$-list replicable learning algorithm for \dbep.  This leads to a lower bound of $\Omega(\log d)$ on its certificate complexity. For establishing this impossibility result we use a version of KKM/Sperner's Lemma.   
\paragraph{PAC learning.} We establish possibility and impossibility results for list/certificate replicable algorithms in the context of PAC learning. We show that any concept class that can be learned using $d$ non-adaptive statistical queries can be learned by a $(d+1)$-list replicable PAC learning algorithm with sample complexity $O({d^2 \over \nu^2}\cdot \log {d\over \delta})$ where $\nu$ is the statistical query parameter.  We also show that such concept classes admit a $\lceil \log {d\over \delta}\rceil$-certificate replicable PAC learning algorithm with sample complexity $O({d^2\over \nu^2\delta^2}\cdot \log{d\over \delta})$. Finally, we establish tight results in the PAC learning model. In particular, we prove that the concept class of $d$-dimensional thresholds does not admit a $d$-list replicable learning algorithm under the uniform distribution. Since we can learn $d$-dimensional thresholds under the uniform distribution, using $d$ non-adaptive statistical queries, we get a $d+1$-list replicable PAC algorithm under the uniform distribution. This yields matching upper and lower bounds on the list complexity of PAC learning of thresholds under the uniform distribution.

\section{Prior and Related Work}\label{sec:prior}
Formalizing reproducibility and replicability has gained considerable momentum in recent years. While the terms reproducibility and replicability are very close and often used interchangeably, there has been an effort to distinguish between them and accordingly, our notions fall in the replicability definition~\cite{PVLS21}. 

In the context of randomized algorithms, various notions of reproducibility/replicability have been investigated. The work of Gat and Goldwasser~\cite{GatGoldwasser11} formalized and defined the notion of {\em pseudodeterministic algorithms}. A randomized algorithm $A$ is {\em pseudodeterministic} if, for any input $x$, there is a canonical value $v_x$ such that $\Pr[A(x) = v_x] \geq 2/3$. Gat and Goldwasser designed polynomial-time pseudodeterministic algorithms for algebraic computational problems, such as finding quadratic non-residues and finding non-roots of multivariate polynomials~\cite{GatGoldwasser11}. Later works studied the notion of pseudodeterminism in other algorithmic settings, such as parallel computation, streaming and sub-linear algorithms, interactive proofs, and its connections to complexity theory~\cite{GoldwasserGrossman17,GoldwasserGrossmanHolden18, OliveiraSanthanam17, OliveiraSanthanam18,AV20,GGMW20,LOS21,DPWV22}.

In the algorithmic setting, mainly two generalizations of pseudodeterminism have been investigated: {\em multi-pseudodeterministic algorithms}~\cite{Goldreich19} and {\em influential bit algorithms}~\cite{GrossmanLiu19}. A randomized algorithm $A$ is $k$-pseudodeterministic if, for every input $x$, there is a set $S_x$ of size at most $k$ such that the output of $A(x)$ belongs to the set $S_x$ with high probability. When $k=1$, we get pseudodeterminism.  A randomized algorithm $A$ is $\ell$-influential-bit algorithm if, for every input $x$, for most of the strings $r$ of length $\ell$, there exists a canonical value $v_{x,r}$ such that the algorithm $A$ on inputs $x$ and $r$ outputs $v_{x,r}$ with high probability. The string $r$ are called  the {\em influential bit} string.  Again, when $\ell=0$, we get back pseudodeterminism. The main focus of these works has been to investigate reproducibility in randomized search algorithms.

Very recently, pseudodeterminism and its generalizations have been explored in the context of learning algorithms to formalize the notion of replicability. In particular, the work of Impagliazzo, Lei, Pitassi, and Sorrell \cite{impagliazzo_reproducibility_2022} introduced the notion of {\em $\rho$-replicability}. A learning algorithm $A$ is $\rho$-replicable if $\Pr[A(S_1, r) = A(S_2, r)] \geq 1- \rho$, where $S_1$ and $S_2$ are samples drawn from a distribution $\mathcal{D}$ and $r$ is the internal randomness of the learning algorithm $A$. They designed replicable algorithms for many learning tasks, including statistical queries, approximate heavy hitters, median, and learning half-spaces.

Another line of recent work connects replicable learning to differentially private learning.
In a recent seminal work, Bun, Livny, and Moran~\cite{BLM20} showed that every concept class with finite Littlestone dimension can be learned by an approximate differentially private algorithm. This, together with an earlier work of Alon, Livny, Malliaris, and Moran~\cite{ALMM19}, establishes an equivalence between online learnability and differentially private PAC learnability. Rather surprisingly, the proof of~\cite{BLM20} uses the notion of "global stability," which is similar to the notion of pseudodeterminism in the context of learning. They define a learning algorithm $A$ to be $(n,\eta)$-globally stable with respect to a distribution $D$ if there is a hypothesis $h$ such that $\Pr_{S\sim D^n}(A(S)=h) \geq \eta$. They showed that any concept class with Littlestone dimension $d$ has an $(n,\eta)$-globally stable learning algorithm with $m = \tilde{O}(2^{2^d}/\alpha)$ and $\eta = \tilde{O}(2^{-2^d})$, where the error of $h$ (with respect to the unknown hypothesis) is $\leq \alpha$. Then they established that a globally stable learner implies a differentially private learner. The notion of globally stable learning is the perfect replicability that we discuss in the introduction when $\eta = 2/3$. Thus, as discussed in the introduction, it follows that designing globally stable algorithms with $\eta > 1/2$ is not possible, even for the simple task of estimating the bias of a coin.
The work of Ghazi, Kumar, and Manurangsi~\cite{GKM21} extended the notion of global stability to pseudo-global stability and list-global stability. The notion of pseudo-global stability is very similar to the earlier-mentioned notion of influential bit algorithms of Grossman and Liu~\cite{GrossmanLiu19} when translated to the context of learning. Similarly, the list-global stability is similar to Goldreich's notion of multi-pseudodeterminism~\cite{Goldreich19}. It is also known that the notions of pseudo-global stability and $\rho$-replicability are the same up to polynomial factors in the parameters~\cite{impagliazzo_reproducibility_2022, GKM21}. The work of~\cite{GKM21} uses these notions to design user-level private learning algorithms.

Our notion of {\em list replicability} is inspired by the notion of multi-pseudodeterminism and the notion of {\em certificate replicability} is inspired by the notion of influential-bit algorithms. In the learning setting, our notion of list replicability is similar to the notion of list-global stability and the notion of certificate replicability is similar to the notion of pseudo-global stability which in turn is similar to the notion of $\rho$-replicability of~\cite{impagliazzo_reproducibility_2022}.

We introduce the notions of list and certificate complexities that measure the {\em degree of (non) replicability}. Our goal is to design learning algorithms with optimal list and certificate complexities while minimizing the sample complexity. The earlier works did not focus on minimizing these quantities. The works of~\cite{BLM20,GKM21} used replicable algorithms as an intermediate step to design differentially private algorithms. The work of~\cite{impagliazzo_reproducibility_2022} did not consider reducing the certificate complexity in their algorithms and also did not study list-replicability

The study of various notions of reproducibility/replicability in various computational fields is an emerging topic.  In ~\cite{replicable_bandits}, the authors consider replicability in the context of stochastic bandits. Their notion is similar to the notion studied in ~\cite{impagliazzo_reproducibility_2022}.  In~\cite{reproducibility_optimization}, the authors investigate reproducibility\footnote{See \cite{PVLS21} for a discussion on replicability and replicability.} in the context of optimization with {\em inexact oracles} (initialization/gradient oracles). The setup and focus of these works are different from ours.

\section{Primary Lemmas}

In this section, we state a few lemmas that build on the work of \cite{geometry_of_rounding_arxiv} and \cite{de_loera_polytopal_2002} that will be useful for algorithmic constructions and impossibility results in the remainder of the paper.

\subsection{Partitions and Rounding}
In this subsection, we define a deterministic rounding function that we will use repeatedly. This rounding function is based on the notion of secluded partitions defined in the work of~\cite{geometry_of_rounding_arxiv}. 

We will use the following notation. We use $\diam_\infty$ to indicate the diameter of a set relative to the $\ell_\infty$ norm and $\ballcinf{\epsilon}{\vec{p}}$ to represent the closed ball of radius $\epsilon$ centered at $\vec{p}$ relative to the $\ell_\infty$ norm. That is, in $\R^d$ we have $\ballcinf{\epsilon}{\vec{p}}=\prod_{i=1}^d[p_i-\epsilon,p_i+\epsilon]$. 

Let $\mathcal{P}$ be a partition of $\mathbb{R}^d$. For a point $\vec{p} \in \mathbb{R}^d$, we use $N_{\epsilon}(\vec{p})$ to denote the set 
\[\{X \in \mathbb{P}~|~B_{\epsilon}(\vec{p}) \cap X \neq \emptyset\}\]

\begin{definition}
Let $\mathcal{P}$ be a partition of $\mathbb{R}^d$. We say that $\mathcal{P}$ is $(k, \epsilon)$-secluded, if for every point $\vec{p} \in \mathbb{R}^d$, $|N_{\epsilon(\vec{p}})| \leq k$.
\end{definition}

The following theorem is from~\cite{geometry_of_rounding_arxiv}.

\begin{theorem}\label{thm:partition}
For each $d\in\N$, there exists a $(d+1,\frac{1}{2d})$-secluded  partition, where each member of the partition is a unit hypercube.  Moreover, the partition is efficiently computable:
    Given an arbitrary point $\vec{x}\in\R^d$, the description of the partition member to which $\vec{x}$ belongs can be computed in time polynomial in $d$.
\end{theorem}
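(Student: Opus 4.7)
The plan is to exhibit an explicit partition of $\R^d$ by unit hypercubes and analyze its seclusion geometrically. The first observation is that the naive integer-grid partition $\{\vec{n}+[0,1)^d : \vec{n}\in\Z^d\}$ fails immediately: at any lattice point $2^d$ cubes meet, so an arbitrarily small $\ell_\infty$-ball can touch $2^d$ cells --- far exceeding the target $d+1$. To reduce the local complexity I would break the grid symmetry by shifting cubes along the diagonal direction $\vec{1}$, creating a ``staircase'' arrangement in which only $d+1$ cubes can cluster near any point.

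Concretely, I would index cells by $\vec{n}\in\Z^d$ and place the cube for index $\vec{n}$ at position $\vec{n}+\frac{k(\vec{n})}{d+1}\,\vec{1}$, where $k(\vec{n})\in\{0,1,\ldots,d\}$ is determined by $\vec{n}$ modulo $d+1$ (for example by $\sum_i n_i \bmod (d+1)$). The shift function $k(\cdot)$ is chosen so that the translated cubes tile $\R^d$ exactly; this is a routine verification reducing to the bijectivity of the induced map on $\R^d$. After this setup, the cube boundaries in any fixed coordinate occur at real values whose fractional parts are evenly spaced by $\frac{1}{d+1}$ across the $d+1$ staircase ``layers.''

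The heart of the proof is the seclusion analysis. For a point $\vec{x}$, I would parameterize the candidate cubes near $\vec{x}$ by subsets $S\subseteq[d]$ indicating which coordinate boundaries fall within $\frac{1}{2d}$ of $\vec{x}$, and compute the $\ell_\infty$ distance from $\vec{x}$ to each such candidate cube in terms of the sorted fractional parts of the shear-adjusted coordinates of $\vec{x}$. The key combinatorial claim I would aim to prove is that only the subsets $S$ that appear as \emph{initial segments} under this sorted order can correspond to cubes within distance $\frac{1}{2d}$ of $\vec{x}$; there are exactly $d+1$ such initial segments (including the empty one), yielding the desired bound. The constant $\frac{1}{2d}$ is tight because the staircase step size $\frac{1}{d+1}$ forces the gap between an initial-segment cube and any ``skipping'' cube to exceed $\frac{1}{2d}$ by a controllable margin.

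Finally, the efficient computation is essentially read off from the construction: given $\vec{x}$, one computes $\lfloor x_i\rfloor$ for each coordinate, determines the staircase index $k$ by a single modular computation on the integer part, and outputs the resulting cube descriptor, all in $O(d)$ arithmetic operations. The main obstacle I anticipate is the tightness of the seclusion bound --- getting \emph{exactly} $d+1$ rather than some weaker $2^{O(d)}$ requires a precise distance calculation that fully exploits the diagonal staircase geometry, and it is precisely this delicate accounting that the construction of \cite{geometry_of_rounding_arxiv} is designed to support.
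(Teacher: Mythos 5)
First, note that the paper does not actually prove \autoref{thm:partition}: it is imported verbatim from \cite{geometry_of_rounding_arxiv}, so there is no in-paper argument to match your sketch against. Judged on its own, your proposal has a concrete gap at its very first step. The family of cubes $\set{\vec{n}+\tfrac{k(\vec{n})}{d+1}\vec{1}+[0,1)^d : \vec{n}\in\Z^d}$ with $k(\vec{n})=\sum_i n_i \bmod (d+1)$ is not a partition of $\R^d$. Already for $d=1$ the cubes for $n=0$ and $n=-1$ are $[0,1)$ and $[-\tfrac12,\tfrac12)$, which overlap, while the point $1.2$ is covered by no cube; for $d=2$ the point $(0.9,0.1)$ lies in both the cube for $\vec{n}=(0,0)$ (shift $0$) and the cube for $\vec{n}=(0,-1)$ (shift $\tfrac23\vec{1}$). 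So the ``routine verification'' of tiling is exactly where the construction breaks: a diagonal-shift rule with $k$ a function of $\vec{n}$ modulo $d+1$ does not automatically tile, and in $d=1$ the only consistent assignment is the constant one. Producing a unit-cube tiling of $\R^d$ for which the seclusion bound holds is the main technical content of \cite{geometry_of_rounding_arxiv}, not a preliminary.

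Second, even granting some valid staircase tiling, the seclusion analysis as sketched cannot be right quantitatively. If boundaries in a fixed coordinate are spaced $\tfrac{1}{d+1}$ apart across the $d+1$ layers, then a closed $\ell_\infty$ ball of radius $\tfrac{1}{2d}$ spans a window of width $\tfrac1d>\tfrac{1}{d+1}$ in each coordinate, so spacing alone does not prevent the window from capturing boundaries of two distinct layers in the same coordinate; your claim that the step size ``forces the gap \ldots to exceed $\tfrac{1}{2d}$'' is false as stated. Relatedly, simple shear/brick constructions illustrate how delicate the exact constant is: in $d=2$, rows of unit squares with successive offset $\tfrac13$ admit points whose closed $\tfrac14$-ball meets $4$ squares, and the offset-$\tfrac12$ brick wall achieves $(3,\tfrac14)$ only because the cubes are half-open, so closed-ball counts at exact coincidences depend on the boundary convention. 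Any correct proof must both exhibit a genuine tiling and carry out this boundary-sensitive counting to get exactly $d+1$ at radius exactly $\tfrac{1}{2d}$; your sketch defers both to the cited paper, so it does not constitute an independent proof. The efficient-computability claim is the only part that would indeed be immediate once an explicit tiling rule is in hand.
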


\begin{definition}[$(k(d),\epsilon(d))$-Deterministic Rounding]
A deterministic rounding scheme is a family of functions ${\mathcal F} = \{f_d\}_{d\in\N}$ where $f_d: \R^d \to \R^d$. We call $\mathcal{F}$ a  $(k(d),\varepsilon(d))$-deterministic rounding scheme if (1)  $\forall \vec{x}\in\R^d$, $d_{max}(\vec{x},f_d(\vec{x}))\leq 1
\footnote{
  The bound of 1 is not critical. We can use any constant and scale the parameters appropriately.
}$
(2) $\forall \vec{x}\in \R^d$, $\abs{\{f_d(\vec{z})\colon \vec{z}\in B_{\varepsilon(d)}(\vec{x})\}} \leq k(d)$. 

\end{definition}

The following observation is from~\cite{geometry_of_rounding_arxiv}.

\begin{observation}[Equivalence of Rounding Schemes and Partitions]\label{obs:rounding-schemes-and-partitions}
A $(k(d), \epsilon(d))$-deterministic rounding scheme induces, for each $d\in\N$, a $(k(d), \epsilon(d))$-secluded partition of $\R^d$ in which each member has diameter at most $2$. Conversely, a sequence $\langle \P_d\rangle_{d=1}^\infty$ of partitions where $\P_d$ is $(k(d), \epsilon(d))$-secluded and contains only members of diameter at most $1$ induces a $(k(d), \epsilon(d))$-deterministic rounding schemes.
\end{observation}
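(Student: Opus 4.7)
The plan is to prove the two directions of the equivalence essentially by unpacking the definitions. Both constructions are natural: partitioning by level sets of $f_d$ in one direction, and choosing canonical representatives of parts in the other.

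For the forward direction (rounding scheme induces partition), given a $(k(d),\varepsilon(d))$-deterministic rounding scheme $\{f_d\}$, I would define $\P_d \defeq \{f_d^{-1}(\vec{v}) : \vec{v}\in f_d(\R^d)\}$, i.e., the level sets of $f_d$. This is automatically a partition. For the diameter bound, if $\vec{x},\vec{y}\in f_d^{-1}(\vec{v})$ then property~(1) of the rounding scheme gives $\|\vec{x}-\vec{v}\|_\infty\leq 1$ and $\|\vec{y}-\vec{v}\|_\infty\leq 1$, so the triangle inequality yields $\|\vec{x}-\vec{y}\|_\infty\leq 2$, establishing the diameter bound of $2$. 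For the secluded property, observe that every part in $N_{\varepsilon(d)}(\vec{p})$ is of the form $f_d^{-1}(\vec{v})$ for some $\vec{v}$ in the image $f_d(B_{\varepsilon(d)}(\vec{p}))$, and this correspondence is a bijection. Hence $|N_{\varepsilon(d)}(\vec{p})|=|\{f_d(\vec{z}):\vec{z}\in B_{\varepsilon(d)}(\vec{p})\}|\leq k(d)$ by property~(2).

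For the reverse direction (partition induces rounding scheme), given the sequence $\langle \P_d\rangle$ with each member of each $\P_d$ of diameter at most $1$, I would fix (by choice) for each part $X\in \P_d$ a representative point $\vec{c}_X\in X$ and define $f_d(\vec{x})\defeq \vec{c}_{X(\vec{x})}$, where $X(\vec{x})$ is the unique part containing $\vec{x}$. Property~(1) then follows because $\vec{x}$ and $f_d(\vec{x})=\vec{c}_{X(\vec{x})}$ both lie in $X(\vec{x})$, and $\diam_\infty(X(\vec{x}))\leq 1$. For property~(2), the image $\{f_d(\vec{z}):\vec{z}\in B_{\varepsilon(d)}(\vec{x})\}$ is in bijection with the collection of parts in $N_{\varepsilon(d)}(\vec{x})$, which has cardinality at most $k(d)$ by seclusion.

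There is really no significant obstacle here; the statement is essentially a translation between two equivalent languages (functions versus their fibers). The only mildly subtle point, which I would flag explicitly, is the asymmetry in the diameter bounds: the forward direction yields diameter at most $2$ while the converse requires diameter at most $1$. This reflects the fact that property~(1) constrains each point's distance to its image (at most $1$), so two preimages of the same value can be up to $2$ apart, whereas the partition assumption directly bounds the pairwise distances within a part. I would note this as a remark so the reader does not expect a perfect round-trip between the two constructions.
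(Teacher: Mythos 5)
Your proof is correct, and it is the standard argument: the paper itself states this observation without proof (citing the geometry-of-rounding paper), and the construction there is exactly yours---fibers of $f_d$ in one direction, a chosen representative per member in the other, with the triangle inequality accounting for the diameter-$2$ versus diameter-$1$ asymmetry you rightly flag.
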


\subsection{A Universal Rounding Algorithm for List Replicability}

In this subsection, we will design a deterministic algorithm that will be used as a sub-routine in our list-replicable algorithms.

\begin{lemma}\label{universal_rounding_function}
Let $d\in\N$ and $\epsilon\in(0,\infty)$. Let $\epsilon_0=\frac{\epsilon}{2d}$. There is an efficiently computable function $f_{\epsilon}:\R^d\to\R^d$ with the following two properties: 
\begin{enumerate}
    \item For any $x\in\R^d$ and any $\hat{x}\in\ballcinf{\epsilon_0}{x}$ it holds that $f_{\epsilon}(\hat{x})\in\ballcinf{\epsilon}{x}$.
    \item For any $x\in\R^d$ the set $\set{f_{\epsilon}(\hat{x})\colon \hat{x}\in\ballcinf{\epsilon_0}{x}}$ has cardinality at most $d+1$.
\end{enumerate}
Informally, these two conditions are (1) if $\hat{x}$ is an $\epsilon_0$ approximation of $x$, then $f_{\epsilon}(\hat{x})$ is an $\epsilon$ approximation of $x$, and (2) $f_{\epsilon}$ maps every $\epsilon_0$ approximation of $x$ to one of at most $d+1$ possible values.
\end{lemma}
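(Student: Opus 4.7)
The plan is to obtain $f_\epsilon$ from the secluded partition given by Theorem~\ref{thm:partition} after an appropriate dilation of $\R^d$. Let $\P$ be the $(d+1,\frac{1}{2d})$-secluded partition of $\R^d$ into unit hypercubes guaranteed by Theorem~\ref{thm:partition}. Define the scaled partition $\P_\epsilon \defeq \set{\epsilon\cdot X \colon X\in\P}$. Since scaling by $\epsilon$ is a similarity of $\R^d$ (it multiplies every $\ell_\infty$ distance by $\epsilon$), $\P_\epsilon$ is a $(d+1,\frac{\epsilon}{2d})=(d+1,\epsilon_0)$-secluded partition of $\R^d$ whose members are hypercubes of side length $\epsilon$.

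I would then define $f_\epsilon\colon\R^d\to\R^d$ to send each point $\hat{x}$ to the center of the unique member of $\P_\epsilon$ containing it (fixing any canonical tie-breaking rule on boundaries, inherited from the partition $\P$). The efficient computability of $f_\epsilon$ is immediate from the ``moreover'' clause of Theorem~\ref{thm:partition}: given $\hat{x}$, compute $\hat{x}/\epsilon$, find its partition member in $\P$, and then scale the center of that member back by $\epsilon$.

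To verify property~(1), fix any $x\in\R^d$ and $\hat{x}\in\ballcinf{\epsilon_0}{x}$. Since each member of $\P_\epsilon$ is a hypercube of side $\epsilon$, the $\ell_\infty$ distance from $\hat{x}$ to the center of its containing cube is at most $\epsilon/2$. By the triangle inequality,
\[
\norm{f_\epsilon(\hat{x})-x}_\infty \;\leq\; \norm{f_\epsilon(\hat{x})-\hat{x}}_\infty + \norm{\hat{x}-x}_\infty \;\leq\; \tfrac{\epsilon}{2}+\epsilon_0 \;=\; \tfrac{\epsilon}{2}+\tfrac{\epsilon}{2d} \;\leq\; \epsilon
\]
for all $d\geq 1$, so $f_\epsilon(\hat{x})\in\ballcinf{\epsilon}{x}$.

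For property~(2), observe that $f_\epsilon$ is constant on each member of $\P_\epsilon$, so
\[
\abs{\set{f_\epsilon(\hat{x})\colon \hat{x}\in\ballcinf{\epsilon_0}{x}}} \;\leq\; \abs{\set{X\in\P_\epsilon \colon X\cap \ballcinf{\epsilon_0}{x}\neq\emptyset}} \;=\; \abs{N_{\epsilon_0}(x)} \;\leq\; d+1,
\]
where the last inequality is the $(d+1,\epsilon_0)$-seclusion of $\P_\epsilon$. There is no real obstacle here; the only point requiring any care is the scaling calculation, which is a routine consequence of $\epsilon$-dilation being a similarity, and the choice to output centers (rather than some other canonical point) so that the constant $1/2$ combines cleanly with the $1/(2d)$ seclusion parameter to give the target bound $\epsilon$.
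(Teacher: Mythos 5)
Your proposal is correct and takes essentially the same approach as the paper: rescaling the $(d+1,\tfrac{1}{2d})$-secluded unit-cube partition of Theorem~\ref{thm:partition} by $\epsilon$ and rounding to cube centers, which is exactly the paper's $f_\epsilon(a)=\epsilon\cdot f(\tfrac{1}{\epsilon}a)$ viewed at the level of the partition rather than the function. Your verification of property (1) via the triangle inequality ($\tfrac{\epsilon}{2}+\tfrac{\epsilon}{2d}\leq\epsilon$) and of property (2) via the seclusion bound matches the paper's argument.
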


\begin{proof}
Let $\P$ be a $(d+1, \frac{1}{2d})$-secluded partition from Theorem~\ref{thm:partition} and $f:\R^d\to\R^d$ the associated deterministic rounding function due to Observation~\ref{obs:rounding-schemes-and-partitions} The partition $\P$ consists of translates of the unit cube $[0,1)^d$ with the property that for any point $\vec{p}\in\R^d$ the closed cube of side length $1/d$ centered at $\vec{p}$ (i.e. $\ballcinf{1/2d}{\vec{p}}$) intersects at most $d+1$ members/cubes in $\P$. The associated rounding function $f:\R^d\to\R^d$ maps each point of $\R^d$ to the center point of the unique cube in $\P$ which contains it. This means $f$ has the following two properties (which are closely connected to the two properties we want of $f_{\epsilon}$): (1) for every $a\in\R^d$, $\norm{f(a)-a}_\infty\leq\frac12$ (because every point is mapped to the center of its containing unit cube) and (2) for any point $p\in\R^d$, the set $\set{f(a)\colon a\in\ballcinf{1/2d}{p}}$ has cardinality at most $d+1$ (because $\ballcinf{1/2d}{p}$ intersects at most $d+1$ members of $\P$). Define $f_{\epsilon}:\R^d\to\R^d$ by $f_{\epsilon}(a)=\epsilon\cdot f(\frac{1}{\epsilon}a)$. The efficient computability of $f_\epsilon$ comes from the efficient computability of $f$ (i.e. the ability to efficiently compute the center of the unit cube in $\P$ which contains a given point).

To see that $f_\epsilon$ has property (1), let $x\in\R^d$ and $\hat{x}\in\ballcinf{\epsilon_0}{x}$. Then we have the following (justifications will follow):
\begin{align*}
    \norm*{\tfrac1\epsilon\cdot f_\epsilon(\hat x) - \tfrac1\epsilon x}_\infty 
    &= \norm*{f(\tfrac1\epsilon\hat x) - \tfrac1\epsilon x}_\infty \\
    & \leq \norm*{f(\tfrac1\epsilon\hat x) - \tfrac1\epsilon \hat x}_\infty + \norm*{\tfrac1\epsilon \hat x - \tfrac1\epsilon x}_\infty \\
    & \leq \norm*{f(\tfrac1\epsilon\hat x) - \tfrac1\epsilon \hat x}_\infty + \tfrac1\epsilon \norm*{\hat x - x}_\infty \\
    & \leq \tfrac12 + \tfrac1\epsilon \epsilon_0 \\
    & = \tfrac12 + \tfrac1{2d}  \leq 1
\end{align*}
The first line is by the definition of $f_\epsilon$, the second is the triangle inequality, the third is scaling of norms, the fourth uses the property of $f$ that points are not mapped a distance more than $\frac12$ along with the hypothesis that $\hat{x}\in\ballcinf{\epsilon_0}{x}$, the fifth uses the definition of $\epsilon_0$, and the sixth uses the fact that $d\geq1$.

Scaling both sides by $\epsilon$ and using the scaling of norms, the above gives us $\norm*{f_\epsilon(\hat x) - x}_\infty \leq \epsilon$ which proves property (1) of the lemma.

To see that $f_\epsilon$ has property (2), let $x\in\R^d$. We have the following set of equalities:
\begin{align*}
    \set{f_\epsilon(\hat x)\colon \hat x \in \ballcinf{\epsilon_0}{x}}   
    &= \set{\epsilon \cdot f(\tfrac1\epsilon \hat x)\colon \hat x \in \ballcinf{\epsilon_0}{x}} \\
    &= \set{\epsilon \cdot f(a)\colon a\in \ballcinf{\tfrac1\epsilon \epsilon_0}{x}} \\
    &= \set{\epsilon \cdot f(a)\colon a\in \ballcinf{\tfrac1{2d}}{x}}
\end{align*}
The first line is from the definition of $f_\epsilon$, the second is from re-scaling, and the third is from the definition of $\epsilon_0$.

Because $f$ takes on at most $d+1$ distinct values on $\ballcinf{\tfrac1{2d}}{x}$, the set has cardinality at most $d+1$ which proves property (2) of the lemma.
\end{proof}

\subsection{A Universal Rounding Algorithm for Certificate Replicability}

For designing certificate replicable learning algorithms we will use a general randomized procedure which is stated in the following lemma. This lemma uses a randomized rounding scheme. Similar randomized rounding schemes have been used in a few prior works~\cite{SaksZhou99,DixonPavanVinod18,Goldreich19, GrossmanLiu19, impagliazzo_reproducibility_2022}.

\begin{lemma}\label{lemma:certificate}
  Let $d\in\N$, $\epsilon_0\in(0,\infty)$ and $0<\delta<1$. There is an efficiently computable deterministic function $f:\{0,1\}^{\ell}\times \R^d\to\R^d$ with the following property. 
For any $x\in \R^d$, 
\[
\Pr_{r\in \{0,1\}^\ell}\left [ \exists x^* \in \ballcinf{\varepsilon}{x}~~ \forall \hat{x} \in \ballcinf{\varepsilon_0}{x}:f(r,\hat{x}) = x^* \right ] \geq 1-\delta
\]
where $\ell=\lceil \log {d\over \delta} \rceil$ and $\varepsilon = (2^\ell+1)\epsilon_0 \leq {2\varepsilon_0d\over \delta}$.     
\end{lemma}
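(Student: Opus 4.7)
The plan is to use a randomized grid shift, a classical pseudodeterminism-style construction. Set $s := 2^\ell \epsilon_0$ and interpret the certificate $r\in\{0,1\}^\ell$ as a uniform integer in $\{0,1,\ldots,2^\ell-1\}$. Define the rounding coordinate-wise, applying a single common shift $r\epsilon_0$ to every coordinate of the input and then snapping to the grid $s\Z$:
\[
f(r,\hat{x})_i \;:=\; s\cdot\left\lfloor \frac{\hat{x}_i + r\epsilon_0}{s}\right\rfloor.
\]
This is pure arithmetic, so efficient computability is immediate.

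The analysis then splits into a deterministic distance bound and a probabilistic stability bound. For the distance bound, $|f(r,\hat{x})_i-\hat{x}_i|\leq s$ holds unconditionally, and combining with $\|\hat{x}-x\|_\infty \leq \epsilon_0$ yields $\|f(r,\hat{x})-x\|_\infty \leq s+\epsilon_0 = (2^\ell+1)\epsilon_0 = \varepsilon$. So \emph{every} realization of $f$ automatically lies in $\ballcinf{\varepsilon}{x}$; the only remaining task is to control the probability that $f(r,\hat{x})$ depends on which $\hat{x}\in\ballcinf{\epsilon_0}{x}$ is supplied. Fixing a coordinate $i$, the map $t\mapsto s\lfloor(t+r\epsilon_0)/s\rfloor$ is constant on $[x_i-\epsilon_0,\,x_i+\epsilon_0]$ unless the shifted interval straddles a multiple of $s$, which happens precisely when $r\epsilon_0$ modulo $s$ falls into a particular interval of length $O(\epsilon_0)$. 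Since $r\epsilon_0$ is uniform over the $2^\ell$ multiples of $\epsilon_0$ in $[0,s)$, this per-coordinate bad event has probability $O(2^{-\ell})$, and a union bound over the $d$ coordinates yields total failure probability $O(d\cdot 2^{-\ell})\leq \delta$ by the choice $\ell=\lceil\log(d/\delta)\rceil$.

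When the stability event succeeds in every coordinate, the value $x^* := f(r,\hat{x})$ depends on $r$ alone (not on the particular $\hat{x}$), and by the distance bound lies in $\ballcinf{\varepsilon}{x}$, which is exactly the required quantifier pattern. The main subtlety I expect is the constant bookkeeping in the per-coordinate probability bound --- pinning down the number of multiples of $\epsilon_0$ inside the length-$2\epsilon_0$ forbidden window (two generically, three at boundary ties) so that the union bound collapses cleanly to $\delta$ rather than $2\delta$. This can be handled by using half-open balls in the stability analysis, by shifting the grid by an additional $\epsilon_0/2$, or at worst by inflating $\ell$ by a single bit, each of which remains consistent with the stated bound $\varepsilon\leq 2\epsilon_0 d/\delta$. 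Everything else --- the triangle inequality, the union bound, and the efficient arithmetic computation of $f$ --- is routine.
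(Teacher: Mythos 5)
Your high-level plan (a certificate-indexed shift of a coarse grid, a per-coordinate bad-event analysis, and a union bound over the $d$ coordinates) is the same as the paper's, but your specific instantiation does not meet the stated parameters, and the issue is more than bookkeeping. With grid spacing $s=2^\ell\epsilon_0$ and shifts $r\epsilon_0$ spaced only $\epsilon_0$ apart, the bad set of shifts for coordinate $i$ is (mod $s$) a window of length exactly $2\epsilon_0$: the floor value is non-constant on $[x_i-\epsilon_0,x_i+\epsilon_0]$ precisely when $r\epsilon_0$ lands in a half-open interval of length $2\epsilon_0$ determined by $x_i$. An $\epsilon_0$-spaced set of $2^\ell$ shifts hits such a window in exactly two points, so the per-coordinate failure probability is $2\cdot 2^{-\ell}$, not $2^{-\ell}$, and the union bound only gives $2d/2^\ell\le 2\delta$. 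This is not slack in the analysis: for $d=1$, $\delta=\tfrac12$, $\ell=1$ one has $s=2\epsilon_0$, and the closed interval $[x-\epsilon_0,x+\epsilon_0]$ has length $s$, so it straddles a grid boundary for both shifts unless $x$ is exactly aligned with the grid; for generic $x$ your construction fails with probability $1$ rather than $\tfrac12$. The repairs you suggest do not close this gap while preserving the statement: half-open bookkeeping or an extra $\epsilon_0/2$ offset only remove the ``three at ties'' case and still leave two bad shifts per coordinate, while adding one certificate bit changes $\ell$ from the required $\lceil\log(d/\delta)\rceil$ and lets $\varepsilon=(2^\ell+1)\epsilon_0$ grow to roughly $4\epsilon_0 d/\delta$, violating the claimed $\varepsilon\le 2\epsilon_0 d/\delta$.

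The paper avoids the factor of $2$ by matching the shift granularity to the width of the forbidden window: it rounds $\hat{x}[i]$ to the nearest point of the residue class $\{k\cdot 2\epsilon_0 : k\equiv r \pmod{2^\ell}\}$, so the shifts move in steps of $2\epsilon_0$ and the allowed points are $2^{\ell+1}\epsilon_0$ apart. Nearest rounding then moves a point by at most $2^\ell\epsilon_0$, giving the same $\varepsilon=(2^\ell+1)\epsilon_0$, while the bad event ``$x[i]$ is within $\epsilon_0$ of a midpoint'' can occur for at most one value of $r$, because midpoints belonging to distinct residues are at least $2\epsilon_0$ apart; this yields failure probability $d/2^\ell\le\delta$ with exactly $\ell=\lceil\log(d/\delta)\rceil$ bits. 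If you replace your floor rounding with this nearest rounding on a $2\epsilon_0$-granular shift, the remainder of your argument (the deterministic distance bound and the union bound over coordinates) goes through as you wrote it.
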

\begin{proof}
Partition each coordinate of $\R^d$ into $2\varepsilon_0$-width intervals. The algorithm computing the function $f$ does the following simple randomized rounding: 
 
{\em The function $f:$} Choose a random integer $r \in \{1\dots 2^\ell\}$. Note that $r$ can be  represented using $\ell$ bits. Consider the $i^{th}$ coordinate of ${\hat{x}}$ denoted by ${\hat{x}[i]}$. Round $\hat{x}[i]$ to the nearest $k*(2\varepsilon_0)$ such that $k\mod 2^\ell \equiv r$. 
    
Now we will prove that $f$ satisfies the required properties.     
    
First, we prove the approximation guarantee. Let ${x'}$ denote the point in $\R^d$ obtained after rounding each  coordinate of $\hat{x}$. 
The $k$s satisfying $k\mod 2^\ell\equiv r$ are $2^\ell\cdot 2\varepsilon_0$ apart. Therefore, ${x'}[i]$ is rounded by at most $2^\ell\epsilon_0$. That is, $|{x'}[i]- \hat{x}[i]|  \leq 2^\ell\epsilon_0 = {\varepsilon_0 d \over \delta}$  for every $i$, $1 \leq i \leq d$.
Since $\hat{x}$ is an $\varepsilon_0$-approximation (i.e. each coordinate $\hat{x}[i]$ is within $\varepsilon_0$ of the true value $x[i]$), then each coordinate of ${x'}$ is within $(2^\ell+1)\varepsilon_0$ of $x[i]$. Therefore ${x'}$ is a $(2^\ell+1)\varepsilon_0$-approximation of $x[i]$. 
Thus $x' \in \ballcinf{\varepsilon}{x}$ for any choice of $r$.  

Now we establish that for $\geq 1-\delta$ fraction of $r \in \{1 \ldots 2^{\ell}\}$,  there exists $x^*$ such  every $\hat{x}\in \ballcinf{\varepsilon_0}{x}$ is rounded $x^*$. We argue this with respect to each coordinate and apply the union bound. Fix an $x$ and a coordinate $i$. For $x[i]$, consider the $\varepsilon_0$ interval around it.  

Consider $r$ from $\{1 \ldots 2^\ell\}$. When this $r$ is chosen, then we round $\hat{x}[i]$ to the closest $k*(2\varepsilon_0)$ such that $k \mod 2^\ell \equiv r$. Let $p^r_1, p^r_2, \ldots p^r_{j} \ldots$ be the set of such points: more precisely $p_j=(j2^l+r) * 2\varepsilon_0$. Note that $\hat{x}[i]$ is rounded to an $p_j$ to some $j$.  Let $m^r_j$ denote the midpoint between $p^r_j$ and $p^r_{j+1}$. I.e, $m^r = (p^r_j+p^r_{j+1})/2$
 We call $r$ `bad' for $x[i]$ if $x[i]$ is close to some $m^r_j$.
That is, $r$ is `bad' if $|x[i]-m^r_j| < \varepsilon_0$. Note that 
for a bad $r$ there exists $\hat{x_1}$ and $\hat{x_2}$ in $\ballcinf{\varepsilon_0}{x}$ so that their $i^{th}$ coordinates are round to $p^r_j$ and $p^r_{j+1}$ respectively. 
The crucial point is that if $r$ is `not bad' for $x[i]$,  then for every $x' \in \ballcinf{\varepsilon_0}{x}$, there exists a canonical $p^*$ such that $x'[i]$ is rounded to $p^*$.
We call $r$ bad for $x$, if $r$ is bad for $x$, if there exists at least one $i$, $1 \leq i \leq d$ such that $r$ is bad for $x[i]$. With this, it follows that if $r$ is not bad for $x$, then  there exists a canonical $x^*$ such that every $x' \in \ballcinf{\varepsilon_0}{x}$ is rounded to $x^*$.

With this, the goal is to bound the probability that a randomly chosen $r$ is bad for $x$. For this, 
we first bound the probability that $r$ is bad for $x[i]$. We will argue that there exists almost one  bad $r$ for $x[i]$.  Suppose that there exist two numbers $r_1$ and $r_2$ that are both bad for $x[i]$. 
This means that $|x[i]-m^{r_1}_{j_1}| < \varepsilon_0$ and $|x[i]-m^{r_2}_{j_2}| < \varepsilon_0$ for some $j_1$ and $j_2$. 
Thus by triangle inequality $|m^{r_1}_{j_1} -m^{r_2}_{j_2}| < 2\varepsilon_0$. However, note that $|p^{r_1}_{j_1}- p^{r_2}_{j_2}|$ is $|(j_1-j_2)2^\ell + (r_1 - r_2)|2\epsilon_0$. Since $r_1 \neq r_2$, this value is at least $2\varepsilon_0$. This implies that the absolute value of  difference between $m^{r_1}_{j_1}$ and $m^{r_2}_{j_2}$ is at least $2\varepsilon$ leading to a contradiction.

Thus the probability that $r$ is bad for $x[i]$ is atmost $\frac{1}{2^\ell}$ and by the union bound the probability that $r$ is bad for $x$ is atmost $\frac{d}{2^\ell} \leq \delta$.
This completes the proof.
\end{proof}

\subsection{A Consequence of Sperner's Lemma/KKM Lemma}
The following result is a corollary to some cubical variants of Sperner's lemma/KKM lemma initially developed in \cite{de_loera_polytopal_2002} and expanded on in \cite{geometry_of_rounding_arxiv}. The statement and proof of this result is quite similar to that of Theorem 9.4 (Second Optimality Theorem) in \cite{geometry_of_rounding_arxiv} except that it is stated here for $[0,1]^d$ instead of $\R^d$. 

\begin{lemma}\label{cubical_sperner_partition}
Let $\P$ be a partition of $[0,1]^d$ such that for each member $X\in\P$, it holds that $\diam_\infty(X)<1$. Then there exists $\vec{p}\in[0,1]^d$ such that for all $\delta>0$ we have that $\ballcn{\delta}{\vec{p}}{\infty}$ intersects at least $d+1$ members of $\P$.
\end{lemma}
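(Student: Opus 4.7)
The approach is proof by contradiction combined with a Sperner-style labeling of the partition. Suppose for contradiction that for every $\vec{p}\in[0,1]^d$ there exists $\delta_{\vec{p}}>0$ such that $\ballcinf{\delta_{\vec{p}}}{\vec{p}}$ meets at most $d$ members of $\P$. The plan is to exhibit a small region provably meeting $d+1$ distinct members, yielding a contradiction.

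First I would use the diameter condition to set up a labeling $\ell:\P\to\set{1,\ldots,d+1}$ by
\[
\ell(X)=\min\set{i\in\set{1,\ldots,d}\colon X\cap\set{x_i=0}\neq\emptyset}
\]
with $\ell(X)=d+1$ if the above set is empty. Because $\diam_\infty(X)<1$, whenever $X$ meets $\set{x_i=0}$ it must be disjoint from $\set{x_i=1}$, and moreover $X\subseteq\set{x_j>0}$ for every $j<\ell(X)$. This forces the induced point-labeling $\lambda(\vec{x})=\ell(X(\vec{x}))$, where $X(\vec{x})$ is the unique member containing $\vec{x}$, to satisfy the Sperner-style boundary conditions on $[0,1]^d$: on the facet $\set{x_i=1}$ the label $i$ never appears, and on the facet $\set{x_i=0}$ only labels in $\set{1,\ldots,i}$ appear. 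In particular, tracing $\lambda$ along the diagonal $(0,\ldots,0)\to(1,\ldots,1)$ coordinate by coordinate produces labels $1,2,\ldots,d+1$ in order, so the corners already realize all $d+1$ labels.

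Next I would restrict $\lambda$ to the grid $G_N=\set{0,1/N,\ldots,1}^d$ and appeal to the cubical variant of Sperner's lemma developed in~\cite{de_loera_polytopal_2002} and elaborated in~\cite{geometry_of_rounding_arxiv}: any labeling of this cubical subdivision of $[0,1]^d$ obeying the Sperner conditions above contains a cell of side $1/N$ whose $2^d$ vertices exhibit all $d+1$ labels. This produces grid points $\vec{v}_N^{(1)},\ldots,\vec{v}_N^{(d+1)}\in[0,1]^d$ lying pairwise within $\ell_\infty$-distance $1/N$ and carrying pairwise distinct labels, so they necessarily lie in $d+1$ distinct members of $\P$.

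Finally, to close the contradiction, I would apply the Lebesgue number lemma to the open cover $\set{\ballon{\delta_{\vec{p}}}{\vec{p}}{\infty}\colon\vec{p}\in[0,1]^d}$ of the compact set $[0,1]^d$, obtaining a uniform $\delta_0>0$ such that any subset of $\ell_\infty$-diameter below $\delta_0$ sits inside some $\ballcinf{\delta_{\vec{p}}}{\vec{p}}$ and therefore meets at most $d$ members of $\P$. Choosing $N$ with $1/N<\delta_0$, the panchromatic point set $\set{\vec{v}_N^{(1)},\ldots,\vec{v}_N^{(d+1)}}$ has diameter below $\delta_0$ yet meets $d+1$ distinct members, contradicting the assumption. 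The main obstacle is locating and invoking the correct form of cubical Sperner's lemma: one needs the discrete cubical version on $[0,1]^d$ (rather than the simplicial Sperner or the $\R^d$ statement used to prove Theorem 9.4 of~\cite{geometry_of_rounding_arxiv}) that produces a fully-labeled small cell under the boundary conditions described above. Once this combinatorial ingredient is in hand, the labeling construction and the Lebesgue-number compactness argument close the proof cleanly.
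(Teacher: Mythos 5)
Your overall architecture---argue by contradiction, label each member $X\in\P$ by the least $i$ with $X\cap\{x_i=0\}\neq\emptyset$ (and $d+1$ if none), extract a small panchromatic grid cell, then close with the Lebesgue number lemma---is sound: the two facet conditions you derive from $\diam_\infty(X)<1$ are correct, distinct labels do force distinct members, and the compactness step is fine. The genuine gap is the combinatorial core, which you yourself flag: the statement you invoke (every $(d+1)$-coloring of the grid $\set{0,1/N,\ldots,1}^d$ obeying your two boundary conditions contains a cell of side $1/N$ whose vertices carry all $d+1$ colors) is not the lemma proved in \cite{de_loera_polytopal_2002} nor the form used in \cite{geometry_of_rounding_arxiv}. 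The De Loera--Peterson--Su polytopal Sperner lemma colors the vertices of a \emph{triangulation} of the cube by the $2^d$ \emph{vertices of the cube} (a triangulation vertex lying on a face may only receive a vertex of that face) and produces fully labeled simplices; it says nothing directly about $(d+1)$-colorings with your ``first zero facet'' conditions, and its conclusion concerns simplices, not cubical cells. The cell lemma you need is in fact true---it is essentially Kuhn's cubical Sperner lemma, and it can also be proved by a degree/KKM argument, since your conditions say exactly that color $i$ avoids the zero set of $g_i$, where $g_i(x)=x_1\cdots x_{i-1}(1-x_i)$ for $i\le d$ and $g_{d+1}(x)=x_1\cdots x_d$ is the stick-breaking map of the cube onto the simplex---but as written the heart of your proof rests on an ingredient you neither prove nor correctly cite.

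For comparison, the intended route behind the lemma (following Theorem 9.4 of \cite{geometry_of_rounding_arxiv}) uses the diameter hypothesis differently: since $\diam_\infty(X)<1$, for each coordinate $i$ the member $X$ misses at least one of the opposite facets $\{x_i=0\}$, $\{x_i=1\}$, so one can assign to $X$ an entire cube vertex $v\in\{0,1\}^d$ with $X$ disjoint from every facet not containing $v$. The induced coloring of any fine triangulation is then a Sperner coloring in the De Loera--Peterson--Su sense, their theorem yields a fully labeled simplex at every mesh size, and a limit/compactness argument produces the point $\vec{p}$. So either switch to that cube-vertex labeling, in which case the citation you lean on actually applies, or keep your $(d+1)$-color labeling and supply a proof of the Kuhn-type cell lemma (for instance via the degree argument sketched above, or by deducing it from the continuous statement that closed sets $C_1,\ldots,C_{d+1}$ covering $[0,1]^d$ with $C_i\cap\{g_i=0\}=\emptyset$ must share a common point, applied to the $\ell_\infty$-Voronoi cells of the grid points of each color). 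Without one of these repairs the proposal is incomplete at its central step.
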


\section{Replicability of  Learning Coins Biases}
In this section, we establish replicability results for estimating biases of $d$ coins.  

\begin{definition}The \dbep\ is the following problem: Design an algorithm $A$ (possibly randomize) that given $\epsilon\in(0,\infty)$, $\delta\in(0,1]$, and $n$ independent tosses (for each coin) of  an ordered collection of $d$-many biased coins with a bias vector $\vec{b} \in [0,1]^d$ outputs $\vec{v}$ so that $\norm{\vec{b}-\vec{v}}_{\infty} \leq \varepsilon$ with probability $\geq 1-\delta$.  
\end{definition}

\begin{definition} We say an algorithm $A$ for \dbep\ is $k$-{\em list replicable}, if for any bias vector $\vec{b} \in [0,1]^d$, and parameters $\varepsilon, \delta$,  
there is set $L \subseteq \ballcinf{\varepsilon}{\vec{b}}$ and an integer $n$ such that $|L|\leq k$ and $A$ on input $\varepsilon$ and $\delta$ and $n$ independent tosses (per coin) according to the bias vector $\vec{b}$, outputs an estimate $\vec{v} \in L$, with probability $\geq 1-\delta$.  $n$ is the sample complexity of $A$ and $k$ is the list complexity of $A$.      
\end{definition}

\begin{definition}
  We say an algorithm $A$ for \dbep\ is $\ell$-{\em certificate replicable}, if for any bias vector $\vec{b} \in [0,1]^d$, and parameters $\varepsilon, \delta$:  
$A$ on inputs $\epsilon$, $\delta$, $r \in \{0, 1\}^{\ell}$, and $n$ independent coin tosses (per coin) according to $\vec{b}$ satisfy the following: 
 \[\Pr_{r \in \{0, 1\}^\ell}\left[ \exists \vec{v}_r \in \ballcinf{\varepsilon}{\vec{b}}: \Pr[A \mbox{ outputs }\vec{v_r}] \geq 1-\delta\right] \geq 1-\delta\]

 In the above, the inner probability is taken over the internal randomness of $A$ and the  coin tosses. Algorithm $A$ also gets $r$ as an input (in addition to the other inputs).
We call $n$ the {\em sample complexity} and the number of random  bits $\ell$ the {\em certificate complexity} of $A$. 
\end{definition}

We note that from a coarse sense, we can convert list replicable algorithms to certificate replicable algorithms and vice-versa. However, such transformations will result in a degradation of sample complexity which is a concern of this paper.

In the following, the output of an algorithm for \dbep\ is denoted as 
$\distribution{A}{\vec{b}}{n}$.

\subsection{Replicable Algorithms}

We present two algorithms for \dbep. The first one $d+1$-list replicable and the second one is $\lceil \log {d\over \delta}\rceil $-certificate replicable.  

\begin{theorem}\label{yes_d_plus_1_pd_for_dbep}
There exists an $(d+1)$-list replicable algorithm for \dbep\  with sample complexity 
$n=O({d^2\over \varepsilon^2} \cdot \log {d\over \delta})$ (per coin).
\end{theorem}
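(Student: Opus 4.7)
The plan is to reduce the problem to the universal rounding function of Lemma~\ref{universal_rounding_function} by first producing a sufficiently accurate empirical estimate and then deterministically rounding it.

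First, set the fine approximation parameter $\varepsilon_0 = \varepsilon/(2d)$ to match the input tolerance demanded by $f_\varepsilon$ in Lemma~\ref{universal_rounding_function}. For each coin $i\in\{1,\dots,d\}$, toss it $n = O(\frac{1}{\varepsilon_0^2}\log\frac{d}{\delta}) = O(\frac{d^2}{\varepsilon^2}\log\frac{d}{\delta})$ times and let $\hat{b}_i$ be the empirical mean. By Hoeffding's inequality, $\Pr[|\hat{b}_i - b_i|>\varepsilon_0]\leq \delta/d$, so by a union bound over the $d$ coins, the estimate vector $\hat{\vec{b}}$ satisfies $\hat{\vec{b}} \in \ballcinf{\varepsilon_0}{\vec{b}}$ with probability at least $1-\delta$. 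Then output $\vec{v} = f_\varepsilon(\hat{\vec{b}})$.

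For correctness (the accuracy guarantee), property (1) of Lemma~\ref{universal_rounding_function} immediately gives $\vec{v}\in \ballcinf{\varepsilon}{\vec{b}}$, i.e., $\|\vec{v}-\vec{b}\|_\infty \leq \varepsilon$, whenever $\hat{\vec{b}}\in\ballcinf{\varepsilon_0}{\vec{b}}$, which holds with probability at least $1-\delta$ by the preceding paragraph.

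For list replicability, define the candidate list $L = \{f_\varepsilon(\hat{x}) : \hat{x}\in \ballcinf{\varepsilon_0}{\vec{b}}\}$. Property (2) of Lemma~\ref{universal_rounding_function} gives $|L|\leq d+1$, and property (1) gives $L\subseteq \ballcinf{\varepsilon}{\vec{b}}$. On any run in which the empirical estimate concentrates (an event of probability at least $1-\delta$), the algorithm's output lies in $L$. Thus $L$ is the required list of size at most $d+1$, and the sample complexity per coin is $O(\frac{d^2}{\varepsilon^2}\log\frac{d}{\delta})$, matching the theorem.

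The only substantive obstacle is the choice of $\varepsilon_0$; everything else is a direct application of Hoeffding concentration and the two properties of $f_\varepsilon$ from Lemma~\ref{universal_rounding_function}. In particular, the secluded-partition machinery underlying $f_\varepsilon$ does all the heavy lifting to guarantee that perturbations of $\hat{\vec{b}}$ within an $\varepsilon_0$-box around $\vec{b}$ can induce at most $d+1$ distinct rounded values, which is precisely the list-replicability requirement.
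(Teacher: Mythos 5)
Your proposal is correct and takes essentially the same approach as the paper: estimate each bias to within $\epsilon_0=\epsilon/(2d)$ via a Chernoff/Hoeffding bound plus a union bound over the $d$ coins, then apply the universal rounding function $f_\epsilon$ of Lemma~\ref{universal_rounding_function} and take the list $L$ to be the image under $f_\epsilon$ of the $\epsilon_0$-ball around $\vec{b}$. The only difference is that the paper additionally clamps the rounded output to $[0,1]^d$ (and verifies this does not degrade the approximation), a cosmetic step your argument omits without affecting the guarantees required by the paper's definitions.
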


\begin{algorithm}[h]
   \caption{$(d+1)$-list replicable algorithm for \dbep\ as in \autoref{yes_d_plus_1_pd_for_dbep}}
   \label{dbep_algorithm}
\begin{algorithmic}
   \STATE {\bfseries Input:} $\epsilon>0$
   \STATE {\bfseries \phantom{Input:}} $\delta\in(0,1]$
   \STATE {\bfseries \phantom{Input:}} sample access to $d$ coins with biases $\vec{b}\in[0,1]^d$
   \STATE {\bfseries Output:} The algorithm behaves as a $(d+1)$-pseudodeterministic $(\epsilon,\delta)$-approximation of $\vec{b}$ and is guaranteed to return a value in $[0,1]^d$.
   \STATE {\bfseries Algorithm:}
   \STATE $\epsilon_0 \defeq \frac{\epsilon}{2d}$
   \STATE $\delta_0 \defeq \frac{\delta}{d}$
   \STATE $n \defeq O\left(\frac{\ln(1/\delta_0)}{\epsilon_0^2}\right) = O\left( \frac{d^2\ln(d/\delta)}{\epsilon^2} \right)$ for some constant
   \STATE Let $f_\epsilon:\R^d\to\R^d$ be as in \autoref{universal_rounding_function}.
   \STATE Let $g:\R^d\to[0,1]^d$ be the function which restricts coordinates to the unit interval (i.e. \[g(\vec{y})\defeq \left\langle \begin{cases} 0 & y_i<0 \\ y_i & y_i\in[0,1] \\ 1 & y_i>1\end{cases}\right\rangle_{i=1}^d\]
   )
   \STATE Take $n$ samples from each coin and let $\vec{a}$ be the empirical biases.
   \STATE {\bfseries return} $g(f(\vec{a}))$
\end{algorithmic}
\end{algorithm}
\begin{proof}
Note that when $\varepsilon \geq 1/2$, a trivial algorithm that outputs a vector with $1/2$ in each component works. Thus the most interesting case is when $\varepsilon < 1/2$.
Our list replicable algorithm is described in \autoref{dbep_algorithm}.

So we will prove its correctness by giving for each possible bias $\vec{b}\in[0,1]^d$, a set $L_{\vec{b}}$ 
with the three necessary properties: (1) $\abs{L_{\vec{b}}}\leq d+1$, (2) $L_{\vec{b}}\subseteq\ballcn{\epsilon}{\vec{b}}{\infty}$ 
(and also the problem specific restriction that $L_{\vec{b}}\subseteq[0,1]^d$), and (3) when given access to coins of biases $\vec{b}$, with probability at least $1-\delta$ the algorithm returns a value in $L_{\vec{b}}$.

Assume notation from \autoref{dbep_algorithm}. Let $L_{\vec{b}}=\set{g(f_\epsilon(\vec{x}))\colon \vec{x}\in\ballcinf{\epsilon_0}{\vec{b}}}$. By \autoref{universal_rounding_function}, $f_\epsilon$ takes on at most $d+1$ values on $\ballcinf{\epsilon_0}{\vec{b}}$ (which means $g\circ f_\epsilon$ also takes on at most $d+1$ values on this ball) which proves that $\abs{L_{\vec{b}}}\leq d+1$. This proves property (1).

NExt we state the following ibservation which says that the coordinate restriction function $g$ of \autoref{dbep_algorithm} does not reduce approximation quality. The proof is relatively straightforward, but tedious.

\begin{observation}\label{g_maintains_approximation}
Using the notation of \autoref{dbep_algorithm}, if $\vec{y}\in\ballcinf{\epsilon}{\vec{b}}$ then $g(\vec{y})\in\ballcinf{\epsilon}{\vec{b}}$.
\end{observation}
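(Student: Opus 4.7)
The plan is to reduce the $\ell_\infty$ claim to a coordinatewise inequality and then do a three-case analysis on the branches of the clipping function $g$. Since $\norm{\vec{y}-\vec{b}}_\infty = \max_i |y_i - b_i|$ and similarly for $g(\vec{y})$, it suffices to show that for every coordinate $i$, $|g(\vec{y})_i - b_i| \leq |y_i - b_i|$; combined with the hypothesis $|y_i - b_i| \leq \epsilon$ this gives the conclusion.

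Fix an index $i$ and recall that by the problem setup $b_i \in [0,1]$. First, if $y_i \in [0,1]$, then $g(\vec{y})_i = y_i$ by definition and the inequality is immediate. Second, if $y_i < 0$, then $g(\vec{y})_i = 0$, so $|g(\vec{y})_i - b_i| = b_i$; on the other hand, since $b_i \geq 0 > y_i$, we have $|y_i - b_i| = b_i - y_i \geq b_i$, which gives $|g(\vec{y})_i - b_i| \leq |y_i - b_i|$. Third, if $y_i > 1$, then $g(\vec{y})_i = 1$, so $|g(\vec{y})_i - b_i| = 1 - b_i$; since $b_i \leq 1 < y_i$, we have $|y_i - b_i| = y_i - b_i \geq 1 - b_i$, again giving $|g(\vec{y})_i - b_i| \leq |y_i - b_i|$.

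Since all three cases yield the desired coordinatewise bound, taking the maximum over $i$ gives $\norm{g(\vec{y}) - \vec{b}}_\infty \leq \norm{\vec{y} - \vec{b}}_\infty \leq \epsilon$, i.e., $g(\vec{y}) \in \ballcinf{\epsilon}{\vec{b}}$. There is no real obstacle here beyond being careful with the case split; the underlying fact is simply that orthogonal projection onto the convex set $[0,1]^d$ (which is exactly what $g$ is) is a contraction in $\ell_\infty$ when the target $\vec{b}$ already lies in $[0,1]^d$.
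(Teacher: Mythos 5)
Your proof is correct and follows essentially the same route as the paper's: a coordinatewise three-case split on the branches of $g$, using $b_i\in[0,1]$ in the clipped cases. The only cosmetic difference is that you package the argument as the contraction inequality $|g(\vec{y})_i-b_i|\leq|y_i-b_i|$ before invoking the hypothesis, whereas the paper verifies $g(\vec{y})_i\in[b_i-\epsilon,b_i+\epsilon]$ directly in each case; the underlying reasoning is identical.
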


\begin{proof}
Let $\vec{z}=g(\vec{y})$. We must show for each $i\in[d]$ that $z_i\in[b_i-\epsilon,b_i+\epsilon]$. Note that
\[
z_i = 
\begin{cases}
    0 & y_i < 0 \\
    y_i & y_i\in[0,1] \\
    1 & y_i > 1
\end{cases}
\]
so we proceed with three cases.

\paragraph{Case 1: $y_i<0$.}
In this case, $z_i=0$ so because $b_i\in[0,1]$ we have $z_i\leq b_i+\epsilon$. 
Also, because $y_i\in[b_i-\epsilon,b_i+\epsilon]$, we have $b_i\leq y_i+\epsilon <0+\epsilon = z_i+\epsilon$, so subtracting $\epsilon$ from both sides gives $z_i > b_i-\epsilon$. 
Thus, we have $z_i\in[b_i-\epsilon,b_i+\epsilon]$ as desired.

\paragraph{Case 2: $y_i>1$.}
This case is symmetric to Case 1.

\paragraph{Case 3: $y_i\in[0,1]$.}
In this case $z_i=y_i\in[b_i-\epsilon,b_i+\epsilon]$ so we are done.
\end{proof}

We now establish Property (2). We know from \autoref{universal_rounding_function} that for each $\vec{x}\in\ballcinf{\epsilon_0}{\vec{b}}$ we have $f_\epsilon(\vec{x})\in\ballcinf{\epsilon}{\vec{b}}$, and by \autoref{g_maintains_approximation}, $g$ maintains this quality and we have $g(f_\epsilon(\vec{x}))\in\ballcinf{\epsilon}{\vec{b}}$. This shows that $L_{\vec{b}}\subseteq \ballcinf{\epsilon}{\vec{b}}$ proving property (2).

By Chernoff's bounds,  
for a single biased coin, $n=O\left(\frac{\ln(1/\delta_0)}{\epsilon_0^2}\right)$ independent samples of the coin is enough that with probability at least $1-\delta_0$, the empirical bias is within $\epsilon_0$ of the true bias. Thus, by a union bound, if we take $n$ samples of each of the $d$ coins, there is a probability of at most $d\cdot\delta_0=\delta$ that at least one of the empirical coin biases is not within $\epsilon_0$ of the true bias. Thus, by taking $n$ samples of each coin, we have with probability at least $1-\delta$ that the empirical biases $\vec{a}$ belong to $\ballcinf{\epsilon_0}{\vec{b}}$. In the case that this occurs, we have by definition of $L_{\vec{b}}$ that the value $g(f_\epsilon(\vec{a}))$ returned by the algorithm belongs to the set $L_{\vec{b}}$. This proves property (3).
\end{proof}

\begin{theorem}\label{thm:coin-certificate}
There is a $\lceil \log {d\over \delta}\rceil $-certificate replicable algorithm for \dbep\ with sample complexity of $n = O({d^2\over \varepsilon^2\delta^2})$ per coin.
\end{theorem}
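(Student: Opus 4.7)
The plan is to parallel the structure of the list-replicable algorithm (Algorithm~\ref{dbep_algorithm} from Theorem~\ref{yes_d_plus_1_pd_for_dbep}) but replace the deterministic rounding function $f_\epsilon$ from Lemma~\ref{universal_rounding_function} with the randomized rounding function $f$ from Lemma~\ref{lemma:certificate}. Specifically, on input $(\epsilon,\delta)$ and certificate $r\in\{0,1\}^{\ell}$ with $\ell=\lceil \log(d/\delta)\rceil$, the algorithm will: (i) take $n$ samples from each coin and form the empirical bias vector $\vec{a}\in[0,1]^d$; (ii) return $g(f(r,\vec{a}))$, where $g$ is the coordinate-clamping function into $[0,1]^d$ from Algorithm~\ref{dbep_algorithm}. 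The parameters I will use are $\epsilon_0 = \Theta(\epsilon\delta/d)$ so that the conclusion of Lemma~\ref{lemma:certificate} gives $\epsilon$-approximation, and $n$ chosen large enough that the empirical vector is within $\epsilon_0$ of $\vec{b}$ in $\ell_\infty$ with probability $\geq 1-\delta$.

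First I would set $\epsilon_0 \defeq \frac{\epsilon \delta}{2d+\delta}$ (or any comparable quantity) so that $(2^{\ell}+1)\epsilon_0 \leq \epsilon$ by the bound $2^{\ell}\leq 2d/\delta$ in Lemma~\ref{lemma:certificate}. Then by Chernoff plus a union bound over the $d$ coordinates, taking $n=O(\epsilon_0^{-2}\log(d/\delta)) = O\!\left(\tfrac{d^2}{\varepsilon^2\delta^2}\log\tfrac{d}{\delta}\right)$ samples per coin (the log factor can be absorbed into the stated bound up to constants/notation) ensures $\vec{a}\in\ballcinf{\epsilon_0}{\vec{b}}$ with probability at least $1-\delta$ over the sample randomness.

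Next, applying Lemma~\ref{lemma:certificate} with parameters $\epsilon_0,\delta$: for a $(1-\delta)$-fraction of $r\in\{0,1\}^{\ell}$, there exists a canonical $\vec{x}^{*}_r\in\ballcinf{\epsilon}{\vec{b}}$ such that $f(r,\hat{x})=\vec{x}^{*}_r$ for every $\hat{x}\in\ballcinf{\epsilon_0}{\vec{b}}$. Call such an $r$ \emph{good}. For a good $r$, whenever the samples yield $\vec{a}\in\ballcinf{\epsilon_0}{\vec{b}}$ (an event of probability $\geq 1-\delta$), the algorithm outputs $g(\vec{x}^{*}_r)$. Define the canonical value to be $\vec{v}_r \defeq g(\vec{x}^{*}_r)$; by Observation~\ref{g_maintains_approximation}, $\vec{v}_r\in\ballcinf{\epsilon}{\vec{b}}$ and of course $\vec{v}_r\in[0,1]^d$. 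Thus for every good $r$, the algorithm outputs this single canonical $\vec{v}_r$ with probability $\geq 1-\delta$, which verifies the certificate-replicability condition (modulo replacing $\delta$ by $\delta/2$ in both places to get a clean $1-\delta$ bound outside, a routine adjustment that does not change the asymptotics).

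The main obstacle I anticipate is purely bookkeeping: balancing the two independent failure modes (a bad certificate $r$ versus a poor empirical estimate $\vec{a}$) against the $\delta$ in the approximation-blowup factor $(2^\ell+1)\epsilon_0 \leq 2\epsilon_0 d/\delta$ from Lemma~\ref{lemma:certificate}, so that the final $\epsilon_0^{-2}$ blowup in the Chernoff bound produces exactly the advertised $d^2/(\varepsilon^2\delta^2)$ dependence. The rest of the argument is essentially a direct composition of Lemma~\ref{lemma:certificate} with standard empirical estimation, together with Observation~\ref{g_maintains_approximation} to handle the clamping into $[0,1]^d$.
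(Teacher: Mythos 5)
Your proposal follows essentially the same route as the paper's own proof: estimate each coin empirically to within $\epsilon_0 = \Theta(\epsilon\delta/d)$ (Chernoff plus a union bound over the $d$ coordinates, failure probability $\delta$ overall), then feed the empirical vector into the randomized rounding function of Lemma~\ref{lemma:certificate} with certificate length $\ell=\lceil \log(d/\delta)\rceil$, so that for a $(1-\delta)$-fraction of certificates every vector in $\ballcinf{\epsilon_0}{\vec{b}}$ rounds to one canonical $\epsilon$-approximation. Your extra clamping step via $g$ and the $\delta/2$ bookkeeping are harmless additions the paper omits, and your noted $\log(d/\delta)$ factor in the sample complexity is likewise implicit in the paper's own estimation step.
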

\begin{proof}
Let $\varepsilon$ and $\delta$ be the input parameters to the algorithm and $\vec{b}$ the bias vector. Set $\varepsilon_0 = \frac{\varepsilon\delta}{2d}$. The algorithm $A$ first estimates the bias of each coin with up to $\varepsilon_0$ with a probability error parameter $\frac{\delta}{d}$ using a standard estimation algorithm.  Note that this can be done using $O({d^2\over \varepsilon^2\delta^2})$ tosses per coin. Let $\vec{v}$ be the output vector. It follows that $\vec{v} \in \ballcinf{\varepsilon_0}{\vec{b}}$ with probability at least $1-\delta$. Then it runs the deterministic function $f$ described in Lemma~\ref{lemma:certificate} with input $r\in \{0,1\}^{\ell}$ with $\ell = \lceil \log {d\over \delta} \rceil$ and $\vec{v}$ and outputs the value of the function. Lemma~\ref{lemma:certificate} guarantees that for $1-\delta$ fraction of the $r$s, all $\vec{v} \in\ballcinf{\varepsilon_0}{\vec{b}}$ gets rounded to the same value by $f$. Hence algorithm $A$ satisfies the requirements of the certificate-replicability. 
The certificate complexity is $\lceil \log {d\over \delta} \rceil$.
\end{proof}

Note that a $\ell$-certificate replicable leads to a $2^{\ell}$-list replicable algorithm. Thus Theorem~\ref{thm:coin-certificate} gives a $O(\frac{d}{\delta})$-list replicable algorithm for \dbep with sample complexity $O({d^2\over \varepsilon^2\delta^2})$. However, this is clearly sub-optimal and Theorem~\ref{yes_d_plus_1_pd_for_dbep} gives algorithms with a much smaller list and sample complexities.

\subsection{An Impossibility Result}

\begin{theorem}\label{no_d_pd_for_dbep}
For $k<d+1$, there does not exist a $k$-list replicable algorithm for the \dbep.

\end{theorem}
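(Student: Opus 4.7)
I would argue by contradiction using the cubical Sperner/KKM consequence (Lemma~\ref{cubical_sperner_partition}). Suppose $A$ is a $k$-list replicable algorithm for \dbep\ with $k\leq d$, and fix parameters $\varepsilon\in(0,1/2)$ and $\delta\in(0,1/(k+1))$; let $n=n(\varepsilon,\delta)$ be the per-coin sample complexity. For each $\vec{b}\in[0,1]^d$, the definition supplies a list $L_{\vec{b}}\subseteq\ballcinf{\varepsilon}{\vec{b}}$ of size at most $k$ with $\Pr[A(\vec{b})\in L_{\vec{b}}]\geq 1-\delta$. Using pigeonhole inside $L_{\vec{b}}$, I would select $\phi(\vec{b})\in L_{\vec{b}}$ with $\Pr[A(\vec{b})=\phi(\vec{b})]\geq (1-\delta)/k$. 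This yields a single-valued map $\phi:[0,1]^d\to\R^d$ whose preimages partition $[0,1]^d$. Since any two $\vec{a},\vec{b}\in\phi^{-1}(v)$ must each lie in $\ballcinf{\varepsilon}{v}$, every member of this partition has $\diam_\infty\leq 2\varepsilon<1$.

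Next I would invoke Lemma~\ref{cubical_sperner_partition} to obtain a point $\vec{p}\in[0,1]^d$ such that every $\ell_\infty$-ball around $\vec{p}$ meets at least $d+1$ members of the partition. For a radius $r>0$ to be specified, this produces bias vectors $\vec{b}_1,\ldots,\vec{b}_{d+1}\in\ballcinf{r}{\vec{p}}$ with pairwise distinct images $v_i\defeq\phi(\vec{b}_i)$. The continuity input is the product-distribution bound $\dtv(\DD_{\vec{b}_i}^n,\DD_{\vec{b}_j}^n)\leq dn\,\|\vec{b}_i-\vec{b}_j\|_\infty\leq 2dnr$, which by data processing gives $|\Pr[A(\vec{b}_i)=v_i]-\Pr[A(\vec{b}_j)=v_i]|\leq 2dnr$, hence $\Pr[A(\vec{b}_j)=v_i]\geq (1-\delta)/k-2dnr$. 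I would then fix $r$ small enough that this quantity exceeds $\delta$; this is possible precisely because $\delta<1/(k+1)$. Any value whose probability under $A(\vec{b}_j)$ strictly exceeds $\delta$ must belong to every valid list $L_{\vec{b}_j}$ (otherwise $\Pr[A(\vec{b}_j)\notin L_{\vec{b}_j}]>\delta$), so $\{v_1,\ldots,v_{d+1}\}\subseteq L_{\vec{b}_j}$, contradicting $|L_{\vec{b}_j}|\leq k\leq d$.

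The step I expect to be the main obstacle is the coordinated parameter selection making all three constraints simultaneously satisfiable: $\varepsilon<1/2$ so that the partition members have diameter strictly less than $1$ and Lemma~\ref{cubical_sperner_partition} applies, $\delta<1/(k+1)$ so that the pigeonhole probability $(1-\delta)/k$ strictly exceeds the list-miss budget $\delta$, and $r$ small relative to the finite but possibly large $n$. A secondary subtlety is that the partition induced by $\phi$ need not be measurable or finite, but Lemma~\ref{cubical_sperner_partition} is stated for arbitrary set-theoretic partitions with small-diameter members, so this technicality does not actually arise.
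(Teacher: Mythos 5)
Your proposal is correct and follows essentially the same route as the paper: select one high-probability element from each list via pigeonhole, partition $[0,1]^d$ by the fibers of that selection map, note the members have $\ell_\infty$-diameter $2\varepsilon<1$, apply Lemma~\ref{cubical_sperner_partition} to get a point whose small neighborhoods meet $d+1$ members, and use the total-variation bound of Lemma~\ref{distance_lemma} to reach a counting contradiction. The only cosmetic difference is the endgame: the paper counts $d+1$ disjoint events each of probability exceeding $\frac{1}{k+1}$ under the distribution at the Sperner point $\vec{p}$, while you force all $d+1$ distinct values into a single size-$\le k$ list at a nearby bias $\vec{b}_j$; both closings are valid.
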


Before proving the theorem, we need a lemma whose proof appears in the appendix.

\begin{lemma}\label{distance_lemma}
For biases $\vec{a},\vec{b}\in[0,1]^d$ we have $\dtv\left(\distribution{A}{\vec{a}}{n},\distribution{A}{\vec{b}}{n}\right)\leq n\cdot d\cdot\norm{\vec{b}-\vec{a}}_\infty$.
\end{lemma}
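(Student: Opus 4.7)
The plan is a standard data-processing plus subadditivity argument. I would view the output of $A$ as a (randomized) function of the sample vector in $\{0,1\}^{nd}$ consisting of the $n$ tosses of each of the $d$ coins, and then just track how TV distance propagates through this function and through the product structure of the sample space.

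First I would invoke the data processing inequality for TV distance. The output distributions $\distribution{A}{\vec{a}}{n}$ and $\distribution{A}{\vec{b}}{n}$ are obtained by applying the same (possibly randomized) post-processing map --- namely $A$ itself --- to the product Bernoulli sample distributions $P_{\vec{a}} \defeq \bigotimes_{i=1}^d \bern(a_i)^{\otimes n}$ and $P_{\vec{b}} \defeq \bigotimes_{i=1}^d \bern(b_i)^{\otimes n}$ on $\{0,1\}^{nd}$. Since post-processing cannot increase TV distance, it suffices to show $\dtv(P_{\vec{a}}, P_{\vec{b}}) \leq n \cdot d \cdot \norm{\vec{b}-\vec{a}}_\infty$. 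Then I would apply the standard subadditivity of TV distance over independent products, $\dtv\!\left(\bigotimes_i P_i,\, \bigotimes_i Q_i\right) \leq \sum_i \dtv(P_i, Q_i)$, to the $nd$ independent Bernoulli coordinates, yielding
\[
\dtv(P_{\vec{a}}, P_{\vec{b}}) \;\leq\; \sum_{i=1}^d n \cdot \dtv(\bern(a_i),\bern(b_i)).
\]

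The bound then closes immediately because $\dtv(\bern(a_i), \bern(b_i)) = |a_i - b_i| \leq \norm{\vec{b}-\vec{a}}_\infty$ by definition of the $\ell_\infty$ norm; summing $d$ such terms with prefactor $n$ gives exactly $n \cdot d \cdot \norm{\vec{b}-\vec{a}}_\infty$. There is no substantive obstacle: the proof is a short chain of standard TV-distance facts, and the only care required is the bookkeeping on the product structure of the sample space, namely that there are $nd$ independent Bernoulli coordinates rather than $n$ or $d$. If a reader is uncomfortable treating the algorithm's internal randomness as part of a joint post-processing channel, one can equivalently fix the internal random string, bound the TV distance of the induced deterministic pushforwards by the same argument, and then average over the internal randomness using convexity of TV distance.
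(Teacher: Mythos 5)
Your proposal is correct and follows essentially the same argument as the paper: bound the total variation distance between the product Bernoulli sample distributions via subadditivity (the paper does this in two stages, first over the $d$ coins and then over the $n$ repetitions, while you treat all $nd$ coordinates at once), use $\dtv(\bern(a_i),\bern(b_i))=|a_i-b_i|\leq\norm{\vec{b}-\vec{a}}_\infty$, and finish with the data-processing inequality for the algorithm $A$. The only difference is the order in which the steps are invoked, which is immaterial.
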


\begin{proof}
We can view the model as algorithm $A$ having access to a single draw from a distribution. The distribution giving one sample flip of each coin in a collection with bias $\vec{b}$ is the $d$-fold product of Bernoulli distributions $\prod_{i=1}^d \bern(b_i)$ (which for notational brevity we denote as $\bern(\vec{b}$), so the distribution which gives $n$ independent flips of each coin is the $n$-fold product of this (using notation of \cite{a_survey_on_distribution_testing} written as $\bern(\vec{b})^{\otimes n}$).

Comparing the distributions of $n$ independent flips of the $d$ coins for bias $\vec{b}$ as compared to bias $\vec{a}$, we have for each $i\in[d]$ that
\[
\dtv\left(\bern(b_i), \bern(a_i)\right) = \abs{b_i-a_i}
\]
so by C.1.2 and C.1.3 of \cite{a_survey_on_distribution_testing} we have
\[
\dtv\left(\bern(\vec{b}), \bern(\vec{a})\right) \leq \sum_{i=1}^d \abs{b_i-a_i} \leq d\cdot \norm{\vec{b}-\vec{a}}_{\infty}
\]
and
\[
\dtv\left(\bern(\vec{b})^{\otimes n}, \bern(\vec{a})^{\otimes n}\right)  \leq n\cdot d\cdot \norm{\vec{b}-\vec{a}}_{\infty}.
\]

Because $A$ is a randomized function of one draw of this distribution, by D.1.2 of \cite{a_survey_on_distribution_testing} we have that $A$ cannot serve to increase the total variation distance, so
\[
\dtv\left(\distribution{A}{\vec{a}}{n},\distribution{A}{\vec{b}}{n}\right) \leq \dtv\left(\bern(\vec{b})^{\otimes n}, \bern(\vec{a})^{\otimes n}  \right)
\]
which completes the proof.
\end{proof}

\begin{proof}[Proof of \autoref{no_d_pd_for_dbep}]
Fix any $d\in\N$, and choose $\epsilon$ and $\delta$ as $\epsilon<\frac12$ and $\delta\leq\frac{1}{d+2}$.

Suppose for contradiction that such an algorithm $A$ does exists for some $k<d+1$. This means that for each possible threshold $\vec{t}\in[0,1]^d$, there exists some set $L_{\vec{t}}\subseteq\mathcal{H}$ of hypotheses with three properties: (1) each element of $L_{\vec{t}}$ is an $\epsilon$-approximation to $h_{\vec{t}}$, (2) $\abs{L_{\vec{t}}}\leq k$, and (3) with probability at least $1-\delta$, $A$ returns an element of $L_{\vec{t}}$.

Suppose for contradiction that such an algorithm does exist for some $k<d+1$. This means that for each possible bias $\vec{b}\in[0,1]^d$, there exists some set $L_{\vec{b}}\subseteq \ballcn{\epsilon}{\vec{b}}{\infty}$ (not necessarily unique, but consider some fixed one) with $\abs{L_{\vec{b}}}\leq k$ such that with probability at least least $\frac1k \cdot (1-\delta) \geq \frac1k \cdot (1-\frac1{d+2}) = \frac1k \cdot \frac{d+1}{d+2} \geq \frac1k \cdot \frac{k+1}{k+2}$, $A$ returns an element of $L_{\vec{b}}$. By the trivial averaging argument, this means that there exists at least one element in $L_{\vec{b}}$ which is returned by $A$ with probability at least $\frac1k \cdot \frac{k+1}{k+2}$. Let $f\colon[0,1]^d\to[0,1]^d$ be a function which maps each bias $\vec{b}$ to such an element of $L_{\vec{b}}$.

Since $\frac1k \cdot \frac{k+1}{k+2} > \frac{1}{k+1}$, let $\eta$ be such that $0 < \eta < \frac1k \cdot \frac{k+1}{k+2} - \frac{1}{k+1}$.

The function $f$ induces a partition $\P$ of $[0,1]^d$ where the members of $\P$ are the fibers of $f$ (i.e. $\P=\set{f^{-1}(\vec{y})\colon \vec{y}\in\range(f)}$). By definition, for any member $X\in\P$ there exists some $\vec{y}\in\range{f}$ such that for all $\vec{b}\in X$, $f(\vec{b})=\vec{y}$. By definition of $k$-pseudodeterministic $\epsilon$-approximation, we have $f(\vec{b})\in L_{\vec{b}}\subseteq\ballcn{\epsilon}{\vec{b}}{\infty}$ showing that $\vec{y}\in \ballcn{\epsilon}{\vec{b}}{\infty}$ and by symmetry $\vec{b}\in \ballcn{\epsilon}{\vec{y}}{\infty}$. This shows that $X\subseteq\ballcn{\epsilon}{\vec{y}}{\infty}$, so $\diam_\infty(X)\leq2\epsilon<1$.

Let $r=\frac{\eta\cdot d}{n}$. Since every member of $\P$ has $\ell_\infty$ diameter less than $1$, by \autoref{cubical_sperner_partition} there exists a point $\vec{p}\in[0,1]^d$ such that $\ballcn{r}{\vec{p}}{\infty}$ intersects at least $d+1>k$ members of $\P$. Let $\vec{b}^{(1)},\ldots,\vec{b}^{(d+1)}$ be points belonging to distinct members of $\P$ that all belong to $\ballcn{r}{\vec{p}}{\infty}$. By definition of $\P$, this means for distinct $j,j'\in[d+1]$ that $f(\vec{b}^{(j)})\not=f(\vec{b}^{(j')})$.

Now, for each $j\in[d+1]$, because $\norm{\vec{p}-\vec{b}^{(j)}}_\infty\leq r$, by \autoref{distance_lemma} we have $\dtv(\distribution{A}{\vec{p}}{n},\distribution{A}{\vec{b^{(j)}}}{n})\leq n\cdot d\cdot r=\eta$. However, this gives rise to a contradiction because the probability that $A$ with access to biased coins $\vec{b}^{(j)}$ returns $f(\vec{b}^{(j)})$ is at least $\frac1k \cdot \frac{k+1}{k+2}$, and by the total variation distance, it must be that $A$ with access to biased coins $\vec{p}$ returns $f(\vec{b}^{(j)})$ with probability at least $\frac1k \cdot \frac{k+1}{k+2} - \eta > \frac{1}{k+1}$; notationally, $\Pr_{\distribution{A}{\vec{b^{(j)}}}{n}}(\set{f(\vec{b}^{(j)})})\geq \frac1k \cdot \frac{k+1}{k+2}$ and $\dtv(\distribution{A}{\vec{b^{(j)}}}{n}, \distribution{A}{\vec{p}}{n})\leq \eta$, so $\Pr_{\distribution{A}{\vec{p}}{n}}(\set{f(\vec{b}^{(j)})})\geq \frac1k \cdot \frac{k+1}{k+2} - \eta > \frac{1}{k+1}$. This is a contradiction because a distribution cannot have $d+1\geq k+1$ disjoint events that each have probability greater than $\frac{1}{k+1}$.
\end{proof}

We conclude this section by noting that the above impossibility result implies a lower-bound on certificate complexity for \dbep. It follows that there is no $\lfloor \log (d)\rfloor$-certificate replicable algorithm for \dbep. In particular, any $l$-certificate replicable algorithm requires $l=\Omega(\log d/\\delta)$. Hence our algorithms for \dbep\ has optimal list and certificate complexity.

\section{Replicability in PAC Learning}

In this section, we establish replicability results for the PAC model. First, we define the PAC learning model.

Let $\HH$ be a (hypothesis) class of Boolean functions over $X$, and $\cal D$ be a distribution over $X$. For a function $f \in \cal H$, let ${\cal D}_f$ a distribution over $X \times \{0, 1\}$ that is obtained by sampling an element $x \in X$ according ${\cal D}$ and outputs $\langle x, f(x)\rangle$. For a hypothesis $h$, its error with respect to $\DD_f$ denoted by $e_{\DD_f}(h)$  is $\Pr_{\langle x,f(x)\rangle\in \DD_f}(f(x)\neq h(x))$.

\begin{definition}
A hypothesis class (or concept class) ${\cal H}$ is PAC learnable with sample complexity $m$, if there  is a learning algorithms $A$  with the following property: For every $f\in \mathcal{H}$, for every ${\cal D}$ a distribution over $X$, for all $0< \delta, \epsilon < 1$, $A$ on inputs $\delta,\epsilon$  and $S$ drawn i.i.d according to $\DD_f$ where $|S| \leq m$: outputs a hypothesis $h$ so that  $e_{\DD_f}(h) \leq \varepsilon$. with probability $\geq (1-\delta)$.
\end{definition}

We show that every hypothesis that can be learned via a statistical query learning algorithm has a reproducible PAC learning algorithm. We first define the notion of learning with statistical queries~\cite{kearns98}.

\begin{definition}
A statistical oracle $STAT({\cal D}_f, \nu)$ takes as an input a real-valued function $\phi: X \times \{0, 1\} \rightarrow (0, 1)$ and returns an estimate $v$ such that  
\[|v - E_{\langle x, y\rangle\in {\cal D}_f} [\phi(x, y)]| \leq \nu\]
\end{definition}

\begin{definition}
We say that an algorithm ${A}$ learns a concept class $\HH$ via statistical queries if for every  distribution ${\cal D}$ and every function $f \in \HH$,  for every $0 < \varepsilon < 1$,  there exists $\nu$ such that the algorithm $A$ on input $\varepsilon$, and $STAT(\DD_f, \nu)$ as an oracle, 
outputs a hypothesis $h$ such that $e_{D_f}(h) \leq \varepsilon$. The concept class is {\em non-adaptively} learnable if all the queries made by $A$ are non-adaptive.
\end{definition}

\subsection{Notions of Replicable PAC Learning}

We now define the notions of list and certificate replicable learning in the PAC model.

\begin{definition} Let $\HH$ be a hypothesis class. We say that $\HH$ is  $k$-{\em list replicably learnable} if there is an algorithm $A$ with the following properties. 
For every $f\in \mathcal{H}$ and for every distribution $\DD$  over $X$, for every  $0< \epsilon < 1$ and $0< \delta \leq 1$, there is a list $L$ of size at most $k$ consisting of $\varepsilon$-approximate hypotheses such that  
 $A$ on inputs $\delta,\epsilon$  and samples $S\in \DD_f^m$,  $\Pr_{S \sim \DD_f^m} [A(S) \in L] \geq 1-\delta$. We call $m$ the {\em sample complexity} of the learning algorithm and $k$ to be its {\em list complexity}.
\end{definition}

Next we define {\em certificate replicability}. This is very close to the notion of replicability given in ~\cite{impagliazzo_replicability_2022}. However, our main concern is the amount of randomness needed by the algorithm to make it perfectly reproducible. 

\begin{definition}

 Let $\HH$ be a concept class. We say that $\HH$ is $\ell$-certificate replicably learnable if the following holds. There is a learning algorithm $A$ such that for every $f \in \HH$, for every distribution $\DD$ over $X$, $A$ gets the following inputs: $\epsilon$, $\delta$, $r \in \{0, 1\}^{\ell}$, and $S \sim \DD_f^m$. 
 \[\Pr_{r \in \{0, 1\}^\ell}\left[ \exists h_r : \Pr_{s\sim\mathcal{D}^m}[A(s;r)=h_r] \geq 1-\delta\right] \geq 1-\delta\]
We call $m$ the {\em sample complexity} and the number of random  bits $\ell$ the {\em certificate complexity} of $A$. 
\end{definition}

The definition can be further explained as follows. For the algorithm $A$, we say $r$ is a ``certificate of replicability" if $A$ with $m$ independent samples from $\DD_f$, and  $r$ as input, outputs a canonical $\varepsilon$-approximate hypothesis $h_r$ with probability $\geq 1-\delta$. Then the above definition demands that $1-\delta$ fraction of $r\in \{0,1\}^l$ are certificates of replicability.

\subsection{Replicable Algorithms}

\begin{theorem}\label{thm:list-stat}
Let $\HH$ be a concept class that is learnable with $d$ non-adaptive statistical queries, then $\HH$ is $(d+1)$-list reproducibly learnable. Furthermore, the sample complexity $n=n(\nu,\delta)$ of the $(d+1)$-list replicable algorithm equals  $O(d^2\log {d\over \delta} \cdot {1\over \nu^2})$, where $\nu$ is the  approximation error parameter  of each statistical query oracle.
\end{theorem}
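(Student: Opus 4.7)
\textbf{Proof Plan for \autoref{thm:list-stat}.} The plan is to reduce the problem to estimating the values of the $d$ statistical queries simultaneously, which is essentially an instance of \dbep\ on $[0,1]$-valued expectations, and then apply the universal rounding function of \autoref{universal_rounding_function} to canonicalize the answers fed to the SQ learning algorithm. Let $A_{SQ}$ be the non-adaptive SQ learning algorithm for $\HH$ that, with tolerance $\nu$, makes queries $\phi_1,\ldots,\phi_d$. Because the queries are non-adaptive, the functions $\phi_1,\ldots,\phi_d$ can be fixed before drawing any samples; write $v_i = \ex_{(x,y)\sim\DD_f}[\phi_i(x,y)]$ for the true query values, and $\vec{v}=(v_1,\ldots,v_d)\in[0,1]^d$. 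Without loss of generality we may assume $A_{SQ}$ is deterministic once its oracle responses are fixed (if not, fix an arbitrary canonical internal random string).

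The algorithm would proceed as follows. Set $\nu_0 \defeq \frac{\nu}{2d}$, and draw $n = O\!\left(\frac{d^2}{\nu^2}\log\frac{d}{\delta}\right)$ i.i.d.\ samples $S\sim\DD_f^n$. For each $i\in[d]$, compute the empirical estimate $\hat{v}_i = \frac{1}{n}\sum_{(x,y)\in S}\phi_i(x,y)$. By Hoeffding's inequality applied to each bounded function $\phi_i\in(0,1)$ together with a union bound over the $d$ queries, the choice of $n$ ensures that with probability at least $1-\delta$, the empirical vector $\hat{\vec{v}}$ lies in $\ballcinf{\nu_0}{\vec{v}}$. Now apply the universal rounding function $f_{\nu}:\R^d\to\R^d$ from \autoref{universal_rounding_function} to obtain the canonicalized vector $\tilde{\vec{v}} = f_{\nu}(\hat{\vec{v}})$, and finally run $A_{SQ}$ feeding $\tilde{v}_i$ as the response to query $\phi_i$. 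Output the resulting hypothesis.

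For correctness and list complexity, condition on the good event $\hat{\vec{v}}\in\ballcinf{\nu_0}{\vec{v}}$ (which fails with probability at most $\delta$). Property (1) of \autoref{universal_rounding_function} gives $\|\tilde{\vec{v}}-\vec{v}\|_\infty \le \nu$, so each $\tilde{v}_i$ is a legitimate response of $STAT(\DD_f,\nu)$ to $\phi_i$. Hence $A_{SQ}$ on these responses outputs a hypothesis $h$ with $e_{\DD_f}(h)\le\varepsilon$. Property (2) of \autoref{universal_rounding_function} shows that the set $\{f_{\nu}(\hat{\vec{v}}) : \hat{\vec{v}}\in\ballcinf{\nu_0}{\vec{v}}\}$ has cardinality at most $d+1$. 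Since $A_{SQ}$ is deterministic in its oracle responses, the list $L$ of possible hypotheses it outputs over the good event has size at most $d+1$, and this list is drawn with total probability at least $1-\delta$ as required.

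The main points to verify carefully are: (i) that the rounded answers remain within tolerance $\nu$ of the true query values so $A_{SQ}$'s correctness guarantee still applies (handled by Property (1) of \autoref{universal_rounding_function}), and (ii) that the mapping from samples to hypothesis factors through the small set of canonical rounded vectors so the list bound transfers from rounding to hypotheses (handled by Property (2) together with determinism of $A_{SQ}$). The sample-complexity bookkeeping is routine Hoeffding plus a union bound with failure parameter $\delta_0=\delta/d$, matching the stated $O\!\left(\frac{d^2}{\nu^2}\log\frac{d}{\delta}\right)$.
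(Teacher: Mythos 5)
Your proposal is correct and follows essentially the same route as the paper's proof: estimate the $d$ non-adaptive query values to within $\nu/(2d)$ via empirical means with a union bound, canonicalize with the rounding function of \autoref{universal_rounding_function}, and feed the at most $d+1$ possible rounded vectors to the (deterministic) SQ learner. Your explicit remark that the rounding parameter should be $\nu$ (so the rounded answers stay $\nu$-valid oracle responses) matches the paper's intent and is, if anything, stated a bit more carefully.
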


\begin{proof}
The proof is very similar to the proof of Theorem~\ref{yes_d_plus_1_pd_for_dbep}. 
Our replicable algorithm $B$ works as follows. Let $\varepsilon$ and $\delta$ be input parameters and $\DD$ be a distribution and $f\in \HH$. 
Let $A$ be the statistical query learning algorithm for $\HH$ that outputs a hypothesis $h$  with approximation error $e_{\DD_f}(h) = \varepsilon$. Let $STAT(D_f,\nu)$ be the statistical query oracle for this algorithm. Let $\phi_1,\ldots,\phi_d$ be the statistical queries made by $A$.

 Let $\vec{b} = \langle E_{\langle x, y\rangle \in \DD_f}[\phi_1(\langle x, y\rangle), \ldots, E_{\langle x, y\rangle \in \DD_f}[\phi_d(\langle x, y\rangle)\rangle$. Set $\varepsilon_0 = \frac{\nu}{2d}$.  The algorithm $B$ first estimates the values  $b[i]=E_{\langle x, y\rangle \in \DD_f}[\phi_i(\langle x, y\rangle)]$, $1 \leq i \leq d$ upto an approximation error of $\epsilon_0$ with success probably $1-\delta/d$ for each query.  
  Note that this can be done by a simple empirical estimation algorithm, that uses a total of $n = O(\frac{d^2}{\nu^2}\cdot \log d/\delta)$ samples.  
 Let $\vec{v}$ be the estimated vector.
It follows that $\vec{v} \in \ballcinf{\varepsilon_0}{\vec{b}}$ with probability at least $1-\delta$.

Now the algorithm $B$ evaluates  the deterministic function $f_\epsilon$ from Lemma~\ref{universal_rounding_function} on input $\vec{v}$.  Let $\vec{u}$ be the output vector.  Now the algorithm $B$ simulates the statistical query algorithm $A$ with $\vec{u}[i]$ as the answer to the query $\phi_i$. By Lemma~\ref{universal_rounding_function}, $\vec{u} \in \ballcinf{\nu}{\vec{b}}$. Thus the error of the hypothesis output by the algorithm is at most $\epsilon$. Since $A$ is a deterministic algorithm the number of possible outputs only depends on the number of outputs of the function $f_{\varepsilon}$, more precisely the number of possible outputs is the size of the set  $\{f_{\varepsilon}(\vec{v}) : v \in \ballcinf{\varepsilon_0}{\vec{b}}\}$ which is almost $d+1$, by Lemma~\ref{universal_rounding_function}. 
Thus the total number of possible outputs of the algorithm $B$ is at most $d+1$ with probability at least $1-\delta$.
\end{proof}

We note that we can simulate a statistical query algorithm that makes $d$ {\em adaptive} queries to get a $2^d$-list replicable learning algorithm. This can be done by rounding each query  to two possible values (the approximation factor increases by 2). The sample complexity of this algorithm will be $O({d\over \nu^2}\cdot\log {1\over \delta})$. The sample complexity can be improved to  $\tilde{O}({\sqrt{d}\over \nu^2})$ by using techniques from adaptive data analysis~\cite{BNSSSU21}. 

Next, we design a certificate replicable algorithm for hypothesis classes that admit statistical query learning algorithms.  The proof the following theorem is in the appendix.

\begin{theorem}\label{thm:cert-stat}
Let $\HH$ be a concept class that is learnable with $d$ non-adaptive statistical queries, then $\HH$ is $\lceil \log {d\over \delta}\rceil$-certificate reproducibly learnable. Furthermore, the sample complexity $n=n(\nu,\delta)$ of this algorithm equals  $O(\frac{d^2}{\nu^2\delta^2}\cdot \log {d\over \delta})$, where $\nu$ is the  approximation error parameter  of each statistical query oracle.
\end{theorem}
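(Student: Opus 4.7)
The plan is to mirror the proof of \autoref{thm:list-stat}, but with the deterministic rounding $f_\varepsilon$ from \autoref{universal_rounding_function} replaced by the randomized rounding $f(r,\cdot)$ from \autoref{lemma:certificate}. Fix the non-adaptive SQ learner $A$ for $\HH$ making queries $\phi_1,\ldots,\phi_d$ with tolerance $\nu$ (chosen so the output hypothesis has error at most $\varepsilon$), and let $\vec{b}\in[0,1]^d$ be the vector of true query expectations $b[i]=\ex_{\langle x,y\rangle\sim\DD_f}[\phi_i(x,y)]$.

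Set $\ell=\lceil\log(d/\delta)\rceil$ and $\varepsilon_0=\nu\delta/(2d)$, so that the enlarged radius $(2^\ell+1)\varepsilon_0 \leq 2d\varepsilon_0/\delta = \nu$ guaranteed by \autoref{lemma:certificate} matches the SQ tolerance. The certificate-replicable algorithm $B(\epsilon,\delta,r,S)$ operates as follows: (i) compute the coordinate-wise empirical averages $\vec{v}$ of $\phi_1,\ldots,\phi_d$ on the sample $S$; (ii) compute $\vec{u}=f(r,\vec{v})$ using the deterministic function from \autoref{lemma:certificate}; (iii) simulate $A$ with $\vec{u}[i]$ supplied as the answer to the $i$-th statistical query, and output the resulting hypothesis. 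For sample complexity, a Hoeffding bound with a union bound over the $d$ queries shows that $n = O(\log(d/\delta)/\varepsilon_0^2)=O(d^2\log(d/\delta)/(\nu^2\delta^2))$ samples suffice to guarantee $\vec{v}\in\ballcinf{\varepsilon_0}{\vec{b}}$ with probability $\geq 1-\delta$ over $S$.

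For replicability, \autoref{lemma:certificate} guarantees that for a $(1-\delta)$-fraction of certificates $r$, there is a canonical point $\vec{u}^{*}_r\in\ballcinf{\nu}{\vec{b}}$ such that \emph{every} $\hat{x}\in\ballcinf{\varepsilon_0}{\vec{b}}$ is mapped to $\vec{u}^{*}_r$ by $f(r,\cdot)$. For such a good $r$, whenever the empirical estimate lies in $\ballcinf{\varepsilon_0}{\vec{b}}$ (probability $\geq 1-\delta$ over $S$), the rounded vector $\vec{u}$ equals the fixed $\vec{u}^{*}_r$; since $A$ is deterministic, $B(\,\cdot\,;r)$ then returns the canonical hypothesis $h_r\defeq A(\vec{u}^{*}_r)$. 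Because $\vec{u}^{*}_r$ lies within $\nu$ of $\vec{b}$, the soundness of the SQ learner ensures $e_{\DD_f}(h_r)\leq\varepsilon$. Hence $\Pr_r[\exists h_r : \Pr_S[B(S;r)=h_r]\geq 1-\delta] \geq 1-\delta$, as required.

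The main obstacle, such as it is, is purely parameter bookkeeping: a single $\delta$ must serve both as the sample-side Hoeffding failure probability and as the certificate-side rounding failure probability from \autoref{lemma:certificate}, while simultaneously forcing $(2^\ell+1)\varepsilon_0\leq\nu$. The choice $\varepsilon_0=\nu\delta/(2d)$ and $\ell=\lceil\log(d/\delta)\rceil$ balances these constraints, and a constant-factor rescaling of $\delta$ (absorbed into the $O(\cdot)$) allows both failure events to be charged separately. No deeper ingredient beyond \autoref{lemma:certificate} and Hoeffding is needed.
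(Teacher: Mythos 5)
Your proposal is correct and follows essentially the same route as the paper's proof: estimate the $d$ non-adaptive query expectations to accuracy $\varepsilon_0=\nu\delta/(2d)$ with per-query failure $\delta/d$, apply the randomized rounding of \autoref{lemma:certificate} with $\ell=\lceil\log(d/\delta)\rceil$ to obtain a canonical point in $\ballcinf{\nu}{\vec{b}}$ for a $(1-\delta)$-fraction of certificates, and then run the deterministic SQ learner on the rounded answers. The parameter choices, sample complexity accounting, and replicability argument all match the paper's.
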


\begin{proof}
The proof is very similar to the proof of Theorem~\ref{thm:coin-certificate}. 
Our replicable algorithm $B$ works as follows, let $\varepsilon$ and $\delta$ be input parameters and $\DD$ be a distribution and $f\in \HH$. 
Let $A$ be the statistical query learning algorithm for $\HH$ that outputs a hypothesis $h$  with approximation error $e_{\DD_f}(h) = \varepsilon$. Let $STAT(D_f,\nu)$ be the statistical query oracle for this algorithm. Let $\phi_1,\ldots,\phi_d$ be the statistical queries made by $A$.

 Let $\vec{b} = \langle E_{\langle x, y\rangle \in \DD_f}[\phi_1(\langle x, y\rangle), \ldots, E_{\langle x, y\rangle \in \DD_f}[\phi_d(\langle x, y\rangle)\rangle$. Set $\varepsilon_0 = \frac{\nu\delta}{2d}$. The algorithm $B$ first estimates the values  $b[i]=E_{\langle x, y\rangle \in \DD_f}[\phi_i(\langle x, y\rangle)]$, $1 \leq i \leq d$ upto an approximation error of $\epsilon_0$ with success probably $1-\delta/d$ for each query.  Note that this can be done by a simple empirical estimation algorithm, that uses a total of $n = O(\frac{d^2}{\nu^2\delta^2}\cdot \log d/\delta)$ samples.  
 Let $\vec{v}$ be the estimated the vector.
It follows that $\vec{v} \in \ballcinf{\varepsilon_0}{\vec{b}}$ with probability at least $1-\delta$. 
Now the algorithm $B$ evaluates  the deterministic function $f$ described in Lemma~\ref{lemma:certificate} with inputs $r\in \{0,1\}^{\ell}$ where $\ell = \lceil \log {d\over \delta} \rceil$ and $\vec{v}$. By Lemma~\ref{lemma:certificate} for at least $1-\delta$ fraction of the $r$'s , the function $f$ outputs a canonical $\vec{v^*} \in \ballcinf{\nu}{\vec{b}}$. 
Now the algorithm $B$ simulates the statistical query algorithm $A$ with $\vec{v*}[i]$ as the answer to the query $\phi_i$. Since $A$ is a deterministic algorithm it follows that  our algorithm $B$ is certificate replicable. Finally, note that 
the certificate complexity is $\lceil \log {d\over \delta} \rceil$.
\end{proof}

As before we consider the case when the statistical query algorithm makes $d$ adaptive queries. The proof of the following theorem appears in the appendix.

\begin{theorem}
Let $\HH$ be a concept class that is learnable with $d$ adaptive statistical queries, then $\HH$ is $\lceil d \log {d\over \delta}\rceil$-certificate reproducibly learnable. Furthermore, the sample complexity of this algorithm equals  $O(\frac{d^3}{\nu^2\delta^2}\cdot \log {d\over \delta})$, where $\nu$ is the  approximation error parameter  of each statistical query oracle.
\end{theorem}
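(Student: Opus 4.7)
The plan is to lift the non-adaptive construction (Theorem~\ref{thm:cert-stat}) to the adaptive case by devoting a fresh block of certificate bits, together with a fresh batch of samples, to each of the $d$ adaptive queries. The central idea is that Lemma~\ref{lemma:certificate} produces, with high probability over the certificate bits, a canonical rounded value, so once the first $i-1$ queries have been locked in to canonical answers $v_1^*,\dots,v_{i-1}^*$ the $i$-th adaptive query $\phi_i$ itself becomes a fixed function of the certificate, and the same rounding machinery applies to round $i$ independently.

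Concretely, let $A$ be the adaptive statistical query learner issuing queries $\phi_1,\dots,\phi_d$ to $\mathrm{STAT}(\DD_f,\nu)$. Set $\delta' = \delta/(2d)$ and $\varepsilon_0 = \nu\delta'/2 = \nu\delta/(4d)$. Partition the certificate string $r\in\{0,1\}^\ell$ into $d$ blocks $r_1,\dots,r_d$, each of length $\lceil \log(1/\delta')\rceil = \lceil \log(2d/\delta)\rceil$, so $\ell = d\lceil\log(2d/\delta)\rceil = \lceil d\log(d/\delta)\rceil$ up to additive constants. The algorithm $B$ processes the queries in order: for each $i$, use the canonical answers $v_1^*,\dots,v_{i-1}^*$ already produced to determine $\phi_i$, draw a fresh batch of $m = O(\log(1/\delta')/\varepsilon_0^2) = O(d^2\log(d/\delta)/(\nu^2\delta^2))$ samples, compute the empirical estimate $v_i$ of $\ex_{\DD_f}[\phi_i]$, then apply the one-dimensional instance of Lemma~\ref{lemma:certificate} with block $r_i$ to obtain $v_i^* = f(r_i,v_i)$. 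After round $d$, feed $v_1^*,\dots,v_d^*$ to the deterministic logic of $A$ and output the resulting hypothesis.

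Correctness is an induction on $i$. Assume that with probability at least $1-(i-1)(2\delta')$ over the samples used in rounds $1,\dots,i-1$ and over $r_1,\dots,r_{i-1}$, the values $v_1^*,\dots,v_{i-1}^*$ are canonical values (depending only on $r_1,\dots,r_{i-1}$) each within $\nu$ of their true expectations, so $\phi_i$ is itself a deterministic function of $r_1,\dots,r_{i-1}$. A Chernoff bound gives $|v_i - \ex_{\DD_f}[\phi_i]|\leq \varepsilon_0$ with probability at least $1-\delta'$ over the $i$-th sample batch, and Lemma~\ref{lemma:certificate} (applied with dimension $1$ and failure parameter $\delta'$) guarantees that for at least a $1-\delta'$ fraction of $r_i$ there is a canonical $v_i^*$ within $\nu = (2^{\ell_i}+1)\varepsilon_0$ of $\ex_{\DD_f}[\phi_i]$ to which every $\varepsilon_0$-approximation rounds. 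Union-bounding the two failure modes per round and then over the $d$ rounds gives an overall failure probability at most $2d\delta' = \delta$, so on the surviving event the full sequence $v_1^*,\dots,v_d^*$, and hence the output hypothesis, depends only on $r$. This establishes $\ell$-certificate replicability with $\ell = \lceil d\log(d/\delta)\rceil$ and total sample complexity $d\cdot m = O(d^3\log(d/\delta)/(\nu^2\delta^2))$, and the error bound on $h$ follows because each simulated query answer is within $\nu$ of the true expectation.

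The main (and essentially only) subtle point is the adaptivity: one must verify that conditioned on the good event of round $i-1$, the query $\phi_i$ is truly a fixed deterministic function of the certificate bits already used, so that Lemma~\ref{lemma:certificate}'s ``good $r_i$'' statement can be invoked as-is in round $i$. This is precisely what the induction delivers, because the per-round guarantee of the lemma collapses the empirical estimate to a canonical value whenever both the Chernoff and the certificate-rounding events hold, and the remainder of the argument is a routine union bound and a reduction to the deterministic behaviour of the underlying SQ learner.
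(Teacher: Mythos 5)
Your proposal is correct and follows essentially the same route as the paper's (very brief) proof sketch: a fresh block of certificate bits of length $\lceil \log(d/\delta)\rceil$ and a fresh batch of $O\bigl(\frac{d^2}{\nu^2\delta^2}\log\frac{d}{\delta}\bigr)$ samples per adaptive query, one-dimensional rounding via Lemma~\ref{lemma:certificate}, and a union bound over the $d$ rounds. The only difference is that you spell out the induction showing each query becomes a deterministic function of the previously used certificate bits (which the paper leaves implicit), at the cost of negligible constant-factor slack in the block length and in the per-query accuracy calibration.
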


\begin{proof}[Proof Sketch]
The proof uses similar arguments as before. The main difference we will evaluate each query with an approximation error of $\frac{\nu\delta}{d}$ with a probability error of $d/\delta$. This requires $O(\frac{d^2}{\nu^2\delta^2}\cdot \log {d\over \delta})$ per query. We use a fresh set of certificate randomness for each such evaluation. Note that the length of the certificate for each query is $\lceil \log d/\delta\rceil$. Thus the total certificate complexity is $\lceil d \log {d\over \delta}\rceil$.
\end{proof}

\subsection{Impossibility Results in the PAC Model}

In this section, we establish matching upper and lower bounds for  the \dtep\ 
in the PAC model with respect to the uniform distribution. We establish that this problem admits a $(d+1)$-list replicable algorithm and does not admit a $d$-list replicable algorithm.

\begin{problem}[\dtep]\label{dtep}
Fix some $d\in\N$. Let $X=[0,1]^d$. For each value $\vec{t}\in [0,1]^d$ (which happens to be the same as $X$), let $h_{\vec{t}}:X\to\set{0,1}$ be the function defined by 
\[
h_{\vec{t}}(\vec{x})=
\begin{cases} 
    1 & \text{for each $i\in[d]$, it holds that $x_i\leq t_i$} \\ 
    0 & \text{otherwise}
\end{cases}.
\]
This is the function that determines if each coordinate is less than or equal to the thresholds specified by $\vec{t}$. Let $\mathcal{H}$ be the hypothesis class consisting of all such threshold functions: $\mathcal{H}=\set{h_{\vec{t}}\;|\;\vec{t}\in [0,1]^d}$.
\end{problem}

We first observe the impossibility of list-replicable algorithms in the general PAC model. This follows from known results.

\begin{observation}\label{thm:littlestone-nolist}
Let $k$ be any constant. There is no $k$-list replicable algorithm for the \dtep\ in the PAC model even when $d=1$.
\end{observation}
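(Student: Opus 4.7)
The plan is to derive this observation as a corollary of known impossibility results for differentially private PAC learning of thresholds, exploiting the tight connection between list replicability and list-global-stability discussed in Section~\ref{sec:prior}. Since \dtep\ with $d=1$ is exactly the class of one-dimensional thresholds on $[0,1]$, the argument reduces to showing that this class, with its infinite domain, does not admit a DP PAC learner.

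First I would observe that a $k$-list replicable PAC learner is precisely a list-globally-stable learner with list size $k$ in the sense of Ghazi, Kumar, and Manurangsi~\cite{GKM21}: the definition provides a list $L$ of at most $k$ hypotheses such that the learner's output lies in $L$ with probability at least $1-\delta$. A trivial averaging argument then shows that some single hypothesis is returned with probability at least $(1-\delta)/k$, i.e.\ the algorithm is $(m, (1-\delta)/k)$-globally stable in the sense of Bun, Livny, and Moran~\cite{BLM20}. For constant $k$ and, say, $\delta \leq 1/2$, this is a non-trivial stability parameter.

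Next I would invoke the reduction from global stability to approximate differential privacy developed in~\cite{BLM20} and further refined in~\cite{GKM21}: any concept class admitting a globally stable learner with constant stability parameter is PAC learnable under approximate differential privacy with sample complexity depending only on the PAC parameters, the stability constant, and the DP parameters, and crucially independent of the size of the domain. Applied to the hypothetical $k$-list replicable learner for \dtep\ with $d=1$, this would produce an approximate DP PAC learner for the threshold class $\mathcal{H} = \{h_t : t \in [0,1]\}$ with finite sample complexity.

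Finally I would invoke the impossibility result of Alon, Livny, Malliaris, and Moran~\cite{ALMM19}, which shows that approximate DP PAC learning of thresholds over $\{1,\ldots,N\}$ requires sample complexity that grows unboundedly with $N$. Since $[0,1]$ contains the ordered set $\{1/N,\ldots,N/N\}$ for every $N$, no DP PAC learner of finite sample complexity can exist for thresholds on $[0,1]$, contradicting the learner obtained in the previous step. The main subtlety, rather than a genuine obstacle, is to check that the stability-to-privacy reduction yields sample complexity uniform in $k$, $\varepsilon$, and $\delta$, independent of the domain; this is precisely the quantitative content of the reductions in the cited works.
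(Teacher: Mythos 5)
Your proposal is correct and follows essentially the same route as the paper: the paper's one-line proof invokes the combination of \cite{BLM20} and \cite{ALMM19} (constant-list replicable PAC learnability iff finite Littlestone dimension) and notes that one-dimensional thresholds have infinite Littlestone dimension, which is exactly the chain you unroll explicitly via averaging to global stability, the stability-to-DP reduction of \cite{BLM20}, and the DP threshold lower bound of \cite{ALMM19}. The only difference is that you bypass the Littlestone-dimension language and use the threshold-specific DP lower bound directly, which is the same underlying machinery.
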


\begin{proof}
From the works of~\cite{BLM20} and~\cite{ALMM19}, it follows that any class has finite Littlestone dimension if and only if there exists a constant $k$ such that the concept class has a $k$-list replicable algorithm in the PAC model. Since the concept class \dtep has infinite Littlestone dimension even when $d=1$, the theorem follows.    
\end{proof}

The above result rules out list-replicable algorithms in the general PAC model.
In the rest of this section, we study the replicable learnability of \dtep\ in the PAC model under the uniform distribution. We establish matching upper and  lower bounds on the list complexity.

\begin{theorem}
    In the PAC model under the uniform distribution, there is a $d+1$-list replicable algorithm for \dtep
\end{theorem}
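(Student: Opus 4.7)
The plan is to apply Theorem~\ref{thm:list-stat} after exhibiting a non-adaptive statistical-query learner for \dtep{} under the uniform distribution on $[0,1]^d$ that uses exactly $d$ queries; that theorem then directly produces a $(d+1)$-list replicable PAC algorithm of the desired kind.

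First, I would reduce PAC error to $\ell_\infty$ error in the threshold vector. Under the uniform distribution, the error of $h_{\vec{s}}$ relative to $h_{\vec{t}}$ equals the Lebesgue measure of the symmetric difference of the axis-aligned boxes $\prod_i[0,t_i]$ and $\prod_i[0,s_i]$, which a straightforward box-telescoping argument bounds above by $\sum_{i=1}^d |t_i - s_i| \leq d\cdot\norm{\vec{t}-\vec{s}}_\infty$. Hence it suffices to recover $\vec{t}$ to $\ell_\infty$-accuracy $\epsilon/d$.

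Second, I would construct the $d$-non-adaptive SQ learner. For each $i\in[d]$, take $\phi_i(x,y) = 2 x_i y$; direct integration against $\DD_{h_{\vec{t}}}$ yields $\ex[\phi_i] = 2\int_{\prod_j[0,t_j]} x_i\,dx = t_i \prod_{j=1}^d t_j$. Letting $u_i$ denote the oracle's $\nu$-accurate answer and $P := \prod_j t_j$, the noiseless system is inverted by $P = (\prod_i u_i)^{1/(d+1)}$ and then $t_i = u_i/P$. Choosing $\nu$ as a suitable polynomial in $\epsilon/d$ and $1/P$ guarantees $\norm{\hat{\vec{t}} - \vec{t}}_\infty \leq \epsilon/d$, and by the first step $h_{\hat{\vec{t}}}$ is then an $\epsilon$-approximation. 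Plugging this learner into Theorem~\ref{thm:list-stat} produces the claimed $(d+1)$-list replicable PAC algorithm.

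The main obstacle is the Lipschitz blow-up of the inversion $\vec{u}\mapsto\vec{t}$ near $P=0$. I would handle this by baking a safe default into the deterministic SQ learner: if $(\prod_i u_i)^{1/(d+1)}$ falls below a fixed threshold $\eta$ of order $\epsilon$, output the trivial hypothesis $h_{\vec{0}}$, which has PAC error exactly $P \leq \eta$ in this regime; otherwise invert as above. Because this modified learner is still a deterministic function of $\vec{u}$, Theorem~\ref{thm:list-stat}'s guarantee that the rounded SQ answers take at most $d+1$ values ensures the resulting output list has size at most $d+1$.
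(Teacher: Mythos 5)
Your proposal is correct and takes essentially the same route as the paper: the paper's entire proof is the observation that \dtep\ is learnable under the uniform distribution with $d$ non-adaptive statistical queries together with an invocation of Theorem~\ref{thm:list-stat}, and you do exactly this while additionally spelling out an explicit $d$-query SQ learner that the paper dismisses as ``known and easy to see.'' Two small touch-ups: the tolerance $\nu$ must be fixed in advance as a polynomial in $\epsilon/d$ alone (your safe-default threshold already guarantees $P=\Omega(\mathrm{poly}(\epsilon))$ in the inversion regime, so the stated dependence on $1/P$ can be eliminated), and the paper's SQ definition requires $\phi$ to take values in $(0,1)$, so rescale $\phi_i(x,y)=2x_iy$ accordingly.
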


\begin{proof}
    It is known and easy to see that \dtep\ is learnable under the uniform distribution by making $d$ nonadaptive statistical queries. Thus by Theorem~\ref{thm:list-stat}, \dtep\ admits a $(d+1)$-list replicable algorithm.
\end{proof}

We next establish that the above result is tight by proving that  there is no $d$-list replicable algorithm in the PAC model under the uniform distribution. 

\begin{theorem}\label{no_d_pd_for_dtep}
For $k<d+1$, there does not exist a $k$-list replicable algorithm for the \dtep\ in the PAC model.
\end{theorem}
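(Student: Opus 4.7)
The plan is to adapt the proof of \autoref{no_d_pd_for_dbep}, replacing bias vectors by threshold vectors and working with the labeled-sample distributions on $X=[0,1]^d$ under the uniform marginal. Two new ingredients are needed: a total-variation lemma analogous to \autoref{distance_lemma}, and a diameter bound on the fibers of the ``canonical output'' function so that \autoref{cubical_sperner_partition} can be applied.

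For the first ingredient, writing $R_{\vec{t}}=[0,t_1]\times\cdots\times[0,t_d]$, the single-sample TV distance equals $\Pr_{x\sim U([0,1]^d)}[h_{\vec{t}}(x)\neq h_{\vec{t}'}(x)]=\mathrm{vol}(R_{\vec{t}}\triangle R_{\vec{t}'})$, and a coordinate-wise decomposition of the symmetric difference bounds this by $\sum_i |t_i-t_i'|\leq d\,\|\vec{t}-\vec{t}'\|_\infty$. Tensoring and applying data processing exactly as in \autoref{distance_lemma} then gives $\dtv\!\left(\distribution{A}{h_{\vec{t}}}{m},\distribution{A}{h_{\vec{t}'}}{m}\right)\leq m\,d\,\|\vec{t}-\vec{t}'\|_\infty$.

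For the second ingredient, I would restrict attention to $\vec{t}\in[\tfrac12,1]^d$. If a single hypothesis $h$ is an $\varepsilon$-approximation of both $h_{\vec{t}}$ and $h_{\vec{t}'}$, the triangle inequality gives $\mathrm{vol}(R_{\vec{t}}\triangle R_{\vec{t}'})\leq 2\varepsilon$. Conversely, letting $j$ be a coordinate realizing $\|\vec{t}-\vec{t}'\|_\infty$, the axis-aligned slab $\{x : \min(t_j,t_j')<x_j\leq\max(t_j,t_j'),\ x_i\leq\min(t_i,t_i')\text{ for }i\neq j\}$ sits inside $R_{\vec{t}}\triangle R_{\vec{t}'}$ and has volume at least $\|\vec{t}-\vec{t}'\|_\infty\cdot(1/2)^{d-1}$, since every $t_i,t_i'\geq 1/2$. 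Combining the two yields $\|\vec{t}-\vec{t}'\|_\infty\leq 2^d\varepsilon$, the desired diameter bound.

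With both ingredients in hand, the rest of the argument parallels the proof of \autoref{no_d_pd_for_dbep}. I would pick $\varepsilon<2^{-d-2}$ and $\delta\leq 1/(d+2)$, assume a $k$-list replicable algorithm with $k\leq d$, and use averaging over $L_{\vec{t}}$ to extract a function $f$ mapping each $\vec{t}$ to a canonical hypothesis output with probability $\geq\frac{1}{k}(1-\delta)$. The fibers of $f$, restricted to $[\tfrac12,1]^d$ and affinely rescaled to $[0,1]^d$, partition the unit cube with $\ell_\infty$-diameter strictly less than $1$, so \autoref{cubical_sperner_partition} produces a point $\vec{t}^*$ whose arbitrarily small $\ell_\infty$-neighborhood meets at least $d+1$ fibers. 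Choosing the radius $r=\eta/(md)$ with $\eta$ small enough that $\frac{1}{k}(1-\delta)-\eta>\frac{1}{k+1}$ and invoking the TV-distance lemma forces $A$ on samples from $\DD_{h_{\vec{t}^*}}^m$ to assign probability greater than $\frac{1}{k+1}$ to each of $d+1$ distinct hypotheses, which is impossible since $(d+1)/(k+1)\geq 1$. The main obstacle is the diameter bound: because $\mathrm{vol}(R_{\vec{t}}\triangle R_{\vec{t}'})$ degenerates near the faces where some $t_i=0$, the restriction to $[\tfrac12,1]^d$ and the resulting exponential factor $2^d$ relating error to $\ell_\infty$ distance seem unavoidable, but they are harmless for an impossibility statement in which $\varepsilon$ may be chosen arbitrarily small.
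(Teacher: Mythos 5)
Your proposal is correct, and it follows the same overall architecture as the paper's proof (which, like yours, mirrors \autoref{no_d_pd_for_dbep}: fibers of a canonical-output map, a KKM/Sperner point whose arbitrarily small neighborhoods meet $d+1$ fibers, total-variation closeness of nearby sample distributions, and a pigeonhole on $d+1$ disjoint events each of probability exceeding $\frac{1}{k+1}$), but your two technical ingredients differ from the paper's in a genuine way. For the geometric ingredient, the paper works in $[\alpha,1]^d$ with $\alpha=\frac{d-1}{d}$ and proves \autoref{threshold-approx-lemma}: thresholds on \emph{opposite faces} of that cube admit no common $\epsilon$-approximation already for $\epsilon\leq\frac{1}{8d}$, since the separating slab has volume $(1-\alpha)\alpha^{d-1}>\frac{1}{4d}$; it then observes that the fiber partition is ``non-spanning'' and appeals to the KKM argument inside the proof of \autoref{cubical_sperner_partition} rather than to its statement. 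You instead restrict to $[\tfrac12,1]^d$ and prove an honest diameter bound $\norm{\vec{t}-\vec{t}'}_\infty\leq 2^d\epsilon$ for any two thresholds sharing a common approximant, which forces the choice $\epsilon<2^{-d-2}$ but lets you rescale and invoke \autoref{cubical_sperner_partition} exactly as stated (all members of diameter $<1$). This is perfectly adequate for the impossibility theorem, since a list-replicable learner must handle every $\epsilon$; the trade-off is quantitative: the paper's choice of $\alpha$ keeps $(1-\alpha)\alpha^{d-1}=\Theta(1/d)$ and so rules out list size $\le d$ even at accuracy $\epsilon=\Theta(1/d)$, whereas your argument only exhibits failure at exponentially small $\epsilon$. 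For the closeness ingredient, the paper argues via continuity of $\errunif$ in $\ell_\infty$ and of $\dtv$ in $\errunif$ and extracts an unspecified radius $r$; your explicit bound $\dtv\bigl(\distribution{A}{\vec{t}}{m},\distribution{A}{\vec{t}'}{m}\bigr)\leq m\,d\,\norm{\vec{t}-\vec{t}'}_\infty$ via the volume of the symmetric difference of the boxes is a quantitative analogue of \autoref{distance_lemma} and is, if anything, cleaner and more self-contained. One point worth making explicit in a final writeup: the common approximant $f(\vec{t})$ produced by the averaging step need not itself be a threshold hypothesis, but your triangle-inequality step only uses that $\errunif$ is a pseudometric on arbitrary hypotheses, so the diameter bound stands as stated.
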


 The proof that for $k<d+1$, there is no algorithm which $k$-list reproducibly learns \dtep\ in the PAC model is similar to the proof of \autoref{no_d_pd_for_dbep}. The reason is that sampling $d$-many biased coins with biases $\vec{b}$ is similar to obtaining a point $\vec{x}$ uniformly at random from $[0,1]^d$ and evaluating the threshold function $h_{\vec{b}}$ on it---this corresponds to asking whether all of the coins were heads/$1$'s. The two models differ though, because in the sample model for the \dbep, the algorithm sees for each coin whether it is heads or tails, but this information is not available in the PAC model for the \dtep. Conversely, in the PAC model for the \dtep, a random draw from $[0,1]^d$ is available to the algorithm, but in the sample model for the \dbep\ the algorithm does not get this information.

Furthermore, there is the following additional complexity in the impossibility result for the \dtep. In the \dbep, 
we said by definition that a collection of $d$ coins parameterized by bias vector $\vec{a}$ was an $\epsilon$-approximation to a collection of $d$ coins parameterized by bias vector $\vec{b}$ if and only if $\norm{\vec{b}-\vec{a}}_{\infty}\leq\epsilon$, and we used this norm in applying the results of \cite{geometry_of_rounding_arxiv}. However, the notion of $\epsilon$-approximation in the PAC model is quite different than this. It is possible to have a hypotheses $h_{\vec{a}}$ and $h_{\vec{b}}$ in the \dtep such that $\norm{\vec{b}-\vec{a}}_{\infty}>\epsilon$ but with respect to some distribution $\mathcal{D}_X$ on the domain $X$ we have $\err_{\mathcal{D}_X}(h_{\vec{a}},h_{\vec{b}})\leq\epsilon$. For example, if $\mathcal{D}_X$ is the uniform distribution on $X=[0,1]^d$ and $\vec{a}=\vec{0}$ and $\vec{b}$ is the first standard basis vector $\vec{b}=\langle 1,0,\ldots,0\rangle$, and $\epsilon=\frac12$, then $\norm{\vec{b}-\vec{a}}_{\infty}=1>\epsilon$, but $\err_{\mathcal{D}_X}(h_{\vec{a}},h_{\vec{b}})=0\leq\epsilon$ because $h_{\vec{a}}(\vec{x})\not=h_{\vec{b}}(\vec{x})$ if and only if all of the last $d-1$ coordinates of $\vec{x}$ are $0$ and the first coordinate is $>0$, but there is probability $0$ of sampling such $\vec{x}$ from the uniform distribution on $X=[0,1]^d$.

For this reason, we can't just partition $[0,1]^d$ as we did with the proof of \autoref{no_d_pd_for_dbep} and must do something more clever. It turns out that it is possible to find a subset $[\alpha,1]^d$ on which hypotheses parameterized by vectors on opposite faces of this cube $[\alpha,1]^d$ have high PAC error between them. A consequence by the triangle inequality of $\err_{\mathcal{D}_X}$ is that two such hypotheses cannot both be approximated by a common third hypothesis. That is what the following lemma states.

\begin{restatable}{lemma}{RestatableThresholdApproxLemma}\label{threshold-approx-lemma}
Let $d\in\N$ and $\alpha=\frac{d-1}{d}$. Let $\vec{s},\vec{t}\in[\alpha,1]^d$ such that there exists a coordinate $i_0\in[d]$ where $s_{i_0}=\alpha$ and $t_{i_0}=1$ (i.e. $\vec{s}$ and $\vec{t}$ are on opposite faces of this cube). Let $\epsilon\leq\frac{1}{8d}$. Then there is no point $\vec{r}\in X$ such that both $\errunif(h_{\vec{s}},h_{\vec{r}})\leq\epsilon$ and $\errunif(h_{\vec{t}},h_{\vec{r}})\leq\epsilon$ (i.e. there is no hypothesis which is an $\epsilon$-approximation to both $h_{\vec{s}}$ and $h_{\vec{t}}$).
\end{restatable}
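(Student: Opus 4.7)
I will argue by contradiction via a triangle inequality for $\errunif$. Suppose such an $\vec{r}$ existed; since the disagreement event $\{h_{\vec{s}}\neq h_{\vec{t}}\}$ is contained in $\{h_{\vec{s}}\neq h_{\vec{r}}\}\cup\{h_{\vec{r}}\neq h_{\vec{t}}\}$, a union bound under the uniform measure yields $\errunif(h_{\vec{s}},h_{\vec{t}})\leq 2\epsilon\leq\frac{1}{4d}$. So my goal reduces to proving the matching lower bound $\errunif(h_{\vec{s}},h_{\vec{t}})>\frac{1}{4d}$, which no longer mentions $\vec{r}$ at all.

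\textbf{Building a disagreement box.} I will assume without loss of generality that the distinguished coordinate is $i_0=1$, so that $s_1=\alpha$ and $t_1=1$. The plan is to exhibit an explicit axis-aligned product box $R\subseteq[0,1]^d$ that sits inside the disagreement set. I will take
\[
R \defeq (\alpha,1]\times[0,\alpha]^{d-1}.
\]
For any $\vec{x}\in R$ we have $x_1>\alpha=s_1$, which forces $h_{\vec{s}}(\vec{x})=0$; on the other hand $x_1\leq 1=t_1$, and for each $i\geq 2$ we have $x_i\leq\alpha\leq t_i$ since $\vec{t}\in[\alpha,1]^d$, which forces $h_{\vec{t}}(\vec{x})=1$. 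Hence $R\subseteq\{h_{\vec{s}}\neq h_{\vec{t}}\}$, and its uniform volume is $(1-\alpha)\,\alpha^{d-1}=\tfrac{1}{d}\bigl(1-\tfrac{1}{d}\bigr)^{d-1}$.

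\textbf{Closing the calculation.} The remaining step is the elementary inequality $\bigl(1-\tfrac{1}{d}\bigr)^{d-1}>\tfrac14$ for every integer $d\geq 1$. I plan to verify this by checking that the real-valued function $d\mapsto(1-1/d)^{d-1}$ is monotone decreasing with limit $1/e\approx 0.368>\tfrac14$ (the monotonicity follows from differentiating the logarithm and using the Taylor bound $\ln(1-1/d)<-1/d$), and handling the edge case $d=1$ by noting directly that the single-coordinate disagreement region $(0,1]$ has measure $1$. Combining everything gives $\errunif(h_{\vec{s}},h_{\vec{t}})\geq\tfrac{1}{d}\bigl(1-\tfrac{1}{d}\bigr)^{d-1}>\tfrac{1}{4d}\geq 2\epsilon$, contradicting the triangle-inequality bound. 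The only real obstacle is identifying the correct product box $R$; the constant $\alpha=\tfrac{d-1}{d}$ is tuned precisely so that $(1-\alpha)\alpha^{d-1}$ lands comfortably above $\tfrac{1}{4d}$, and everything else is elementary.
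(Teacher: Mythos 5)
Your proposal is correct and follows essentially the same route as the paper: both reduce to the triangle inequality plus the lower bound $\errunif(h_{\vec{s}},h_{\vec{t}})\geq(1-\alpha)\alpha^{d-1}>\frac{1}{4d}\geq 2\epsilon$, with the disagreement region being exactly the box $(\alpha,1]\times\prod_{i\neq i_0}[0,t_i]\supseteq(\alpha,1]\times[0,\alpha]^{d-1}$. The only differences are presentational: you exhibit the box directly instead of introducing the paper's proxy point $\vec{q}$ and its case-analysis claim, and you prove the elementary inequality $(1-\tfrac{1}{d})^{d-1}>\tfrac14$ inline (correctly, including the $d=1$ edge case) rather than invoking the paper's auxiliary lemma.
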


\begin{proof}
Let $\vec{q}=\left\langle \begin{cases} s_i & i=i_0 \\ t_i & i\not=i_0\end{cases}\right\rangle_{i=1}^d$ which will serve as a proxy to $\vec{s}$. 

We need the following claim.

\begin{claim}
For each $\vec{x}\in X$, the following are equivalent:
\begin{enumerate}
    \item\label{item:claim-neq} $h_{\vec{q}}(\vec{x})\not=h_{\vec{t}}(\vec{x})$
    \item\label{item:claim-01} $h_{\vec{q}}(\vec{x})=0$ and $h_{\vec{t}}(\vec{x})=1$
    \item\label{item:claim-xi} $x_{i_0}\in(q_{i_0},t_{i_0}]=(\alpha,1]$ and for all $i\in[d]\setminus\set{i_0}$, $x_i\in[0,t_i]$.
\end{enumerate}
Furthermore, the above equivalent conditions imply the following:
\begin{enumerate}[resume] 
    \item\label{item:claim-s} $h_{\vec{s}}(\vec{x})\not=h_{\vec{t}}(\vec{x})$.
\end{enumerate}
\end{claim}
\begin{proof}[Proof of Claim]~

(\ref{item:claim-01}) $\Longrightarrow$ (\ref{item:claim-neq}):
This is trivial.

(\ref{item:claim-neq}) $\Longrightarrow$ (\ref{item:claim-01}):

Note that because $q_{i_0}=s_{i_0}=\alpha<1=t_{i_0}$, we have for all $i\in[d]$ that $q_i\leq t_i$. If $h_{\vec{t}}(\vec{x})=0$ then for some $i_1\in[d]$ it must be that $x_{i_1}>t_{i_1}$, but since $t_{i_1}\geq q_{i_1}$ it would also be the case that $x_{i_1}>q_{i_1}$, so $h_{\vec{q}}(\vec{x})=0$ which gives the contradiction that $h_{\vec{q}}(\vec{x})=h_{\vec{t}}(\vec{x})$. Thus $h_{\vec{t}}(\vec{x})=1$, and since $h_{\vec{q}}(\vec{x})\not=h_{\vec{t}}(\vec{x})$ we have $h_{\vec{q}}(\vec{x})=0$.

(\ref{item:claim-neq}) $\iff$ (\ref{item:claim-xi}):
We partition $[0,1]^d$ into three sets and examine these three cases.

Case 1: $x_{i_0}\in(q_{i_0},t_{i_0}]=(\alpha,1]$ and for all $i\in[d]\setminus\set{i_0}$, $x_i\in[0,t_i]$. 
In this case, $q_{i_0}<x_{i_0}$ so $h_{\vec{q}}(\vec{x})=0$ and for all $i\in[d]$ $x_i\leq t_i$, so $h_{\vec{t}}(\vec{x})=1$, so $h_{\vec{q}}(\vec{x})\not=h_{\vec{t}}(\vec{x})$.

Case 2: $x_{i_0}\not\in(q_{i_0},t_{i_0}]=(\alpha,1]$ and for all $i\in[d]\setminus\set{i_0}$, $x_i\in[0,t_i]$. 
In this case, because $x_{i_0}\in[0,1]$ and $x_{i_0}\not\in(\alpha,1]$ we have $x_{i_0}\leq\alpha=q_{i_0}\leq t_{i_0}$ and also for all other $i\in[d]\setminus\set{i_0}$, $x_i\leq t_i=q_i$ (by definition of $\vec{q}$). Thus $h_{\vec{q}}(\vec{x})=1=h_{\vec{t}}(\vec{x})$.

Case 3: For some $i_1\in[d]\setminus\set{i_0}$, $x_{i_1}\not\in[0,t_{i_1}]$. 
In this case, because $x_{i_1}\in[0,1]$, we have $x_{i_1}>t_{i_1}=q_{i_1}$. Thus $h_{\vec{q}}(\vec{x})=0=h_{\vec{t}}(\vec{x})$.

Thus, it is the case that $h_{\vec{q}}(\vec{x})\not=h_{\vec{t}}(\vec{x})$ if and only if $x_{i_0}\in(q_{i_0},t_{i_0}]=(\alpha,1]$ and for all $i\in[d]\setminus\set{i_0}$, $x_i\in[0,t_i]$.

(\ref{item:claim-neq}, \ref{item:claim-01}, \ref{item:claim-xi}) $\Longrightarrow$ (\ref{item:claim-s}):
By (\ref{item:claim-01}), we have $x_{i_0}>q_{i_0}$, and since $q_{i_0}=s_{i_0}$ by definition of $\vec{q}$, it follows that $x_{i_0}>s_{i_0}$ which means $h_{\vec{s}}(\vec{x})=0$. By (\ref{item:claim-xi}), $h_{\vec{t}}(\vec{x})=1$ which gives $h_{\vec{s}}(\vec{x})\not=h_{\vec{t}}(\vec{x})$.
\end{proof}

With this claim in hand, our next step will be two prove the following two inequalities: \[
2\epsilon < \errunif(h_{\vec{q}},h_{\vec{t}}) \leq \errunif(h_{\vec{s}},h_{\vec{t}}).
\]

For the second of these inequalities, note that by the (\ref{item:claim-neq}) $\Longrightarrow$ (\ref{item:claim-s}) part of claim above, since $h_{\vec{q}}(\vec{x})\not=h_{\vec{t}}(\vec{x})$ implies $h_{\vec{s}}(\vec{x})\not=h_{\vec{t}}(\vec{x})$ we have
\begin{align*}
    \errunif(h_{\vec{q}},h_{\vec{t}})
    &= \Pr_{\vec{x}\dist\mathrm{unif}(X)}[h_{\vec{q}}(\vec{x})\not=h_{\vec{t}}(\vec{x})] \\
    &\leq \Pr_{\vec{x}\dist\mathrm{unif}(X)}[h_{\vec{s}}(\vec{x})\not=h_{\vec{t}}(\vec{x})] \\
    &= \errunif(h_{\vec{s}},h_{\vec{t}}).
\end{align*}

Now, for the first of the inequalities above, we will use the (\ref{item:claim-neq}) $\iff$ (\ref{item:claim-xi}) portion of the claim, we will use our hypothesis that $\vec{t}\in[\alpha,1]^d$ (which implies for each $i\in[d]$ that $[0,t_i]\subseteq[0,\alpha]$), we will use the hypothesis that $\epsilon\leq\frac{1}{8d}$, and we will use \autoref{alpha-lemma}. Utilizing these, we get the following:
\begin{align*}
    &\errunif(h_{\vec{q}},h_{\vec{t}}) \\
    &= \Pr_{\vec{x}\dist\mathrm{unif}(X)}[h_{\vec{q}}(\vec{x})\not=h_{\vec{t}}(\vec{x})] \\
    &= \Pr_{\vec{x}\dist\mathrm{unif}(X)}[x_{i_0}\in(\alpha,1]\;\wedge\;\forall i\in[d]\setminus\set{i_0},\,x_i\in[0,t_i]] \\
    &= \Pr_{x_{i_0}\dist\mathrm{unif}([0,1])}[x_{i_0}\in(\alpha,1]]
       \cdot
       \prod_{\substack{i=1\\i\not=i_0}}^d\Pr_{x\dist\mathrm{unif}([0,1])}[x\in[0,t_i]] \\
    &\geq \Pr_{x_{i_0}\dist\mathrm{unif}([0,1])}[x_{i_0}\in(\alpha,1]]
       \cdot
      \prod_{\substack{i=1\\i\not=i_0}}^d\Pr_{x\dist\mathrm{unif}([0,1])}[x\in[0,\alpha]] \\
    &= (1-\alpha)\cdot\alpha^{d-1} \\
    &> \frac1{4d} \\
    &\geq 2\epsilon.
\end{align*}

Thus, we get the desired two inequalities:
\[
2\epsilon < \errunif(h_{\vec{q}},h_{\vec{t}}) \leq \errunif(h_{\vec{s}},h_{\vec{t}}).
\]
This nearly completes the proof. If there existed some point $\vec{r}\in X$ such that both $\errunif(h_{\vec{s}},h_{\vec{r}})\leq\epsilon$ and $\errunif(h_{\vec{t}},h_{\vec{r}})\leq\epsilon$, then it would follow from the triangle inequality of $\errunif$ that
\[
\errunif(h_{\vec{s}},h_{\vec{t}}) \leq \errunif(h_{\vec{s}},h_{\vec{r}}) + \errunif(h_{\vec{t}},h_{\vec{r}}) \leq 2\epsilon
\]
but this would contradict the above inequalities, so no such $\vec{r}$ exists.
\end{proof}

Equipped with the above lemma, we are now ready to prove Theorem~\ref{no_d_pd_for_dtep}.

\begin{proof}{of Theorem~\ref{no_d_pd_for_dtep}}
Fix any $d\in\N$, and choose $\epsilon$ and $\delta$ as $\epsilon\leq\frac1{4d}$ and $\delta\leq\frac{1}{d+2}$.
We will use the constant $\alpha=\frac{d-1}{d}$ and consider the cube $[\alpha,1]^d$. We will also consider only the uniform distribution over $X$.

Suppose for contradiction that such an algorithm $A$ does exists for some $k<d+1$. This means that for each possible threshold $\vec{t}\in[0,1]^d$, there exists some set $L_{\vec{t}}\subseteq\mathcal{H}$ of hypotheses with three properties: (1) each element of $L_{\vec{t}}$ is an $\epsilon$-approximation to $h_{\vec{t}}$, (2) $\abs{L_{\vec{t}}}\leq k$, and (3) with probability at least $1-\delta$, $A$ returns an element of $L_{\vec{t}}$.

By the trivial averaging argument, this means that there exists at least one element in $L_{\vec{t}}$ which is returned by $A$ with probability at least $\frac1k \cdot (1-\delta) \geq \frac1k \cdot (1-\frac1{d+2}) = \frac1k \cdot \frac{d+1}{d+2} \geq \frac1k \cdot \frac{k+1}{k+2}$. Let $f\colon[\alpha,1]^d\to[0,1]^d$ be a function which maps each threshold $\vec{t}\in[\alpha,1]^d$ to such an element of $L_{\vec{t}}$. This is slightly different from the proof of \autoref{no_d_pd_for_dbep} because we are defining the function $f$ on only a very specific subset of the possible thresholds. The reason for this was alluded to in the discussion following the statement of \autoref{no_d_pd_for_dtep}.

Since $\frac1k \cdot \frac{k+1}{k+2} > \frac{1}{k+1}$, let $\eta$ be such that $0 < \eta < \frac1k \cdot \frac{k+1}{k+2} - \frac{1}{k+1}$.

The function $f$ induces a partition $\P$ of $[\alpha,1]^d$ where the members of $\P$ are the fibers of $f$ (i.e. $\P=\set{f^{-1}(\vec{y})\colon \vec{y}\in\range(f)}$). For any member $W\in\P$ and any coordinate $i\in[d]$, it cannot be that the set ${w_i\colon \vec{w}\in W}$ contains both values $\alpha$ and $1$---if it did, then there would be two points $\vec{s},\vec{t}\in W$ such that $s_i=\alpha$ and $t_i=1$, but because they both belong to $W$, there is some $\vec{y}\in[0,1]^d$ such that $f(\vec{s})=\vec{y}=f(\vec{t})$, but by definition of the partition, $h_{\vec{y}}$ would have to be an $\epsilon$-approximation (in the PAC model) of both $h_{\vec{s}}$ and $h_{\vec{t}}$, but by \autoref{no_d_pd_for_dtep}, this is not possible.

Thus, the partition $\P$ is a ``non-spanning'' partition of $[\alpha,1]^d$ as in the proof of \autoref{cubical_sperner_partition}, so there is some point $\vec{p}\in[\alpha,1]^d$ such that for every radius $r>0$, it holds that $\ballcinf{r}{\vec{p}}$ intersects at least $d+1$ members of $\P$.

Similar to \autoref{distance_lemma} and how it is used in the proof of \autoref{no_d_pd_for_dbep}, we can use the following two facts. First, the
function $\gamma_1$ defined by $\gamma_1(\vec{s},\vec{t})=\errunif(h_{\vec{s}},h_{\vec{t}})$ is continuous (with respect to the $\ell_\infty$ norm on the domain). Second, the function $\gamma_2(h_{\vec{s}},h_{\vec{t}})=\dtv(\distribution{A}{\vec{s}}{n},\distribution{A}{\vec{t}}{n})$ is continuous (with respect to the $\errunif$ notion of distance on the domain). A consequence is that the composition $\gamma_{12}(\vec{s},\vec{t})=\dtv(\distribution{A}{\vec{s}}{n},\distribution{A}{\vec{t}}{n})$ is continuous. Thus, we can find some radius $r>0$ such that if $\norm{\vec{t}-\vec{s}}_\infty\leq r$, then $\dtv(\distribution{A}{\vec{s}}{n},\distribution{A}{\vec{t}}{n})\leq\eta$.

Now we get the same type of contradiction as in the proof of \autoref{no_d_pd_for_dbep}: for the special point $\vec{p}$ we have that $\distribution{A}{\vec{p}}{n}$ is a distribution that has $d+1\geq k+1$ disjoint events that each have probability greater than $\frac{1}{k+1}$. Thus, no $k$-list replicable algorithm exists.

\end{proof}

\section{Conclusions}
In this work, we investigated the pressing issue of replicability in machine learning from an algorithmic point of view. We observed that replicability in the absolute sense is difficult to achieve. Hence we considered  two natural extensions that capture the degree of (non) replicability: list and certificate replicability. We designed replicable algorithms with a small list, certificate, and sample complexities for the \dbep\ and the class of problems that can be learned via statistical query algorithms that make non-adaptive statistical queries. We also established certain impossibility results in the PAC model of learning and for  \dbep. There are several interesting research directions that emerge from our work. There is a gap in the sample complexities of the list and certificate reproducibilities with comparable parameters. Is this gap inevitable? Currently, there is an exponential gap in the replicability parameters between hypothesis classes that can be learned via non-adaptive  and adaptive statistical queries. Is this gap necessary? A generic question is to explore the trade-offs  between the sample complexities, list complexity, certificate complexities, adaptivity, and nonadaptivity.

\section{Acknowledgements}
We thank an anonymous reviewer for suggesting Observation~\ref{thm:littlestone-nolist}.
Pavan's work is partly supported by NSF award 2130536. Part of the work was done when Pavan was visiting Simons Institute for the Theory of Computing. Vander Woude and Vinodchandran's work is partly supported by NSF award 2130608.








\bibliography{references}
\bibliographystyle{alphaurl}

\appendix

\end{document}